\newif\ifarxiv 
\newif\ifreview 
\newif\iffinal 
\pgfplotsset{compat=1.18}
\newtheorem{theorem}             {Theorem}
\newtheorem{lemma}      [theorem]{Lemma}
\newtheorem{corollary}  [theorem]{Corollary}
\newtheorem{definition} [theorem]{Definition}
\g@addto@macro\bfseries{\boldmath}
\newcommand{\ie}{i.\,e.\xspace}
\newcommand{\eg}{e.\,g.\xspace}
\newcommand{\Z}{\mathbb{Z}}
\newcommand{\N}{\mathbb{N}}
\newcommand{\R}{\mathbb{R}}
\DeclareMathOperator{\Unif}{Unif}                         
\DeclareMathOperator{\Bin}{Bin}                           
\newcommand*{\bigO}{\mathcal{O}}
\newcommand{\prob}[1]{\Pr\left(#1\right)}                 
\newcommand{\hv}{\ensuremath{\mathrm{HV}}\xspace}         
\newcommand{\deltahv}{\ensuremath{\mathrm{HV}^+}\xspace}  
\newcommand{\nadir}{\ensuremath{\mathrm{nad}}\xspace}      
\newcommand{\ideal}{\ensuremath{\min}\xspace}              
\newcommand{\ones}[1]{|#1|_1}                             
\newcommand{\zeroes}[1]{|#1|_0}                           
\newcommand{\TRAP}{\textsc{Trap}\xspace}                           
\newcommand{\TWOMAX}{\textsc{TwoMax}\xspace}                       
\newcommand{\OMM}{\textsc{OMM}\xspace}                             
\newcommand{\OJZJfull}{\textsc{OneJumpZeroJump}\xspace}            
\newcommand{\OTZT}{\textsc{OTZT}\xspace}                           
\newcommand{\OTZTfull}{\textsc{OneTrapZeroTrap}\xspace}            
\newcommand{\ZOmonotone}{0/1-monotone\xspace}                             
\newcommand{\cdist}{\ensuremath{\textsc{cDist}}\xspace}   
\newcommand{\nsga}{NSGA\nobreakdash-II\xspace}
\newcommand{\nsgaiii}{NSGA\nobreakdash-III\xspace}
\newcommand{\refer}{\mathcal{R}}
\newcommand{\toggleplot}[1]{{\textcolor{red}{Plots removed to increase compilation speed. Use command $\backslash$toggleplot in preamble to reinsert them.}}}
\newcommand{\andre}[1]{\textcolor{green!50!black}{}}
\newcommand{\dirk}[1]{\textcolor{purple}{}}
\newcommand{\cuong}[1]{\textcolor{orange}{}}
\newcommand{\newedit}[1]{\textcolor{black}{#1}}
\newcommand{\neweditx}[1]{\textcolor{black}{#1}}
\newcommand*{\todo}[1]{\textcolor{red}{}}
\newcommand{\andre}[1]{\textcolor{green!50!black}{[(Andre) #1]}}
\newcommand{\dirk}[1]{\textcolor{purple}{[(Dirk) #1]}}
\newcommand{\cuong}[1]{\textcolor{orange}{[(Cuong) #1]}}
\newcommand{\newedit}[1]{\textcolor{black}{#1}}
\newcommand{\neweditx}[1]{\textcolor{blue}{#1}}
\newcommand*{\todo}[1]{\textcolor{red}{\textrm{(TODO: #1)}}}
\begin{document}

\ifarxiv
\author{Duc-Cuong~Dang}
\author{Andre~Opris}
\author{Dirk~Sudholt}
\affil{University of Passau, Passau, Germany}
\date{}
\else
\author{Duc-Cuong~Dang}
\affiliation{
    \institution{University of Passau\city{Passau}\country{Germany}}
}
\author{Andre~Opris}
\affiliation{
    \institution{University of Passau\city{Passau}\country{Germany}}
}
\author{Dirk~Sudholt}
\affiliation{
    \institution{University of Passau\city{Passau}\country{Germany}}
}
\fi

\title{\neweditx{Illustrating} the Efficiency of Popular Evolutionary Multi-Objective Algorithms Using Runtime Analysis}

\ifarxiv
\maketitle
\fi

\begin{abstract}



Runtime analysis has recently been applied to popular evolutionary multi-objective (EMO) algorithms like NSGA-II in order to establish a rigorous theoretical foundation. However, most analyses showed that these algorithms have the same performance guarantee as the simple (G)SEMO algorithm. To our knowledge, there are no runtime analyses showing an advantage of a popular EMO algorithm over the simple algorithm for deterministic problems.

We propose such a problem and use it to showcase the superiority of popular EMO algorithms over (G)SEMO:
\OTZTfull is a straightforward generalization of the well-known \TRAP function to two objectives. We prove that, while GSEMO requires at least
$n^{n}$ expected fitness evaluations to optimise \OTZTfull, popular EMO algorithms
NSGA-II, NSGA-III and SMS-EMOA, all
enhanced with a mild diversity mechanism of avoiding genotype duplication,
only require $O(n\log{n})$ 
expected fitness evaluations. 
Our analysis reveals the importance of the key components in each of these sophisticated algorithms 
and contributes to a better understanding of their capabilities.
\end{abstract}

\ifarxiv\else
\keywords{Runtime analysis, evolutionary multiobjective optimisation}
\fi

\ifarxiv\else 
\maketitle
\fi

\section{Introduction}\label{sec:intro}

Runtime analysis is a technique for estimating the performance of evolutionary algorithms with rigorous mathematical arguments. 
It bounds the expected time an evolutionary algorithm needs to achieve some goal, such as finding a global optimum, a good approximation, or cover the Pareto front in multi-objective scenarios. These results provide a solid theoretical foundation, help understand the effect of parameters and algorithmic design choices on performance, and inform the search for improved algorithm designs. Runtime analysis was used to identify phase transitions between efficient and inefficient running times for parameters of common selection operators~\cite{Lehre2010a}, the offspring population size in comma strategies~\cite{Rowe2013} or the mutation rate for difficult monotone pseudo-Boolean functions~\cite{Doerr2012c,LenglerS18}. 
Runtime analysis has inspired novel algorithm designs with proven performance gains such as amplifying the success probability of variation operators in the $(1+(\lambda,\lambda))$~GA~\cite{Doerr2015}, choosing mutation rates from heavy-tailed distributions to escape from local optima~\cite{Doerr2017-fastGA} or choosing parameters adaptively throughout the run~\cite{Doerr2019opl,LehreQ22,QinL22}.
For a long time research was restricted to simple algorithms like the $(1+1)$~EA in single-objective optimisation and the simplest possible evolutionary multi-objective algorithms (EMOAs), SEMO and GSEMO~\cite{Laumanns2004,Giel2010}. The latter create one new offspring in every step through selecting a parent uniformly from the population and applying either 1-bit flips (SEMO) or standard bit mutation (GSEMO) and then remove all dominated search points from the population. 

Recent breakthrough results showed that analysing state-of-the-art EMOAs is possible. \neweditx{\citet{ZhengLuiDoerrAAAI22}} provided the first runtime analysis for the NSGA-II algorithm~\cite{Deb2002} as the best cited and most widely used EMOA to date. The key aspect of their analysis \neweditx{\cite{ZhengLuiDoerrAAAI22,ZhengD2023}} was to show that, when the population is chosen large enough, fitness vectors for all non-dominated search points survive to the next generation. This research spawned a number of follow-on papers studying different design aspects of NSGA-II such as the computation of the crowding distance~\cite{Zheng2022}, improving the parent selection~\cite{Bian2022PPSN} or applying heavy-tailed mutations~\cite{DoerrQu22,DoerrQ2023}. 
Recent work showed that crossover can be highly beneficial and speed up the search for the Pareto front~\cite{Dang2023,Bian2022PPSN,Doerr2023Qu,Dang2024}. Runtime analyses was also used to capture and quantify the inefficiency of NSGA\nobreakdash-II 
\neweditx{on the 3-objective extension ($3$-\OMM) of the \textsc{OneMinMax} benchmark function}~\cite{Doerr2023}. 
%
\neweditx{Regarding NSGA-III~\cite{Deb2014} that uses reference points to encourage diversity in many-objective optimisation, \citet{WiethegerD23} showed that on $3$-\OMM if the number of reference points and the population is chosen large enough 
then the function can be optimised efficiently. 
A more general analysis of NSGA-III for many-objective problems will appear in the current conference \cite{Opris2024}.} 
%
SMS-EMOA \cite{Beume2007}, another popular algorithm with non-dominated sorting, was analysed \neweditx{by~\citet{Bian2023}} to show the benefit of a modification to its survival selection, \neweditx{and again in~\cite{Zheng2024} for a many-objective multi-modal function}.

Despite this rapid progress, some important open questions remain. So far, positive results for NSGA-II only show that the same performance guarantees apply that were already known for the much simpler (G)SEMO algorithms. Furthermore, all analyses of NSGA-II and NSGA-III so far only considered search points in the first layer of the population, that is, the non-dominated solutions. Thus, we are lacking a good understanding of when and how keeping dominated solutions in the population is helpful, and why the greater sophistication in NSGA-II over (G)SEMO is justified.
A first step towards answering these questions was made in~\cite{Dang2023}, where it was shown that NSGA-II is more robust on a noisy benchmark problem than (G)SEMO. However, no deterministic optimisation scenario is known where NSGA-II provably outperforms (G)SEMO. 






\textbf{Our contribution:} 
In this work we propose a (deterministic) benchmark function on which more sophisticated algorithms like NSGA-II, NSGA-III and the hypervolume-based SMS-EMOA~\cite{Beume2007} all outperform the simple (G)SEMO algorithm. 
The function is \OTZTfull (\OTZT for short) which is a straightforward generalisation of 
the well-known \TRAP function to multi objectives. 
We prove the inefficiency of simple EMO algorithms GSEMO and SEMO on this 
function due to the fact that they automatically remove dominated solutions 
from the population. On the other hand, popular algorithms that use non-dominated 
sorting to rank solutions like NSGA-II, NSGA-III and SMS-EMOA, when equipped with a mild diversity 
mechanism that avoids genotype duplication, can optimise the function in expected 
polynomial time. These results highlight the benefits of the non-dominated sorting,
as GSEMO and SEMO lack this feature, and the reason for keeping dominated solutions 
while solving EMO problems.

While having copies of solutions is required  
for non-elitist algorithms \cite{Corus2018x}, we think this is not essential
for the listed elitist EMO algorithms\footnote{\neweditx{Papers
\cite{Beume2007,Deb2002,Deb2014} seldom discuss this point 
since the functions used for benchmarking have continuous decision space. But 
popular libraries for EMO (\eg PyMOO \cite{Deb2020}) often provide 
the option to remove duplicates when working with discrete 
optimisation.}}. The mechanism is known to have little or no effectiveness 
in multi-modal optimisation in single objective optimisation \cite{Friedrich2009,Oliveto2015}, and 
as far as we know the only example where this mild mechanism is shown
to be beneficial is \cite{Dang2016a}. Here we show that when combining 
\neweditx{the avoidance of genotype duplication} 
with 
other components of EMO algorithms which themselves tend to improve the population 
diversity, the mechanism is beneficial. 
Our results can also be easy generalised to stronger 
diversity mechanisms, \ie avoiding phenotype duplication. Note that the inefficient 
GSEMO and SEMO algorithms already prevent fitness duplication. 

This is the first runtime analysis of EMO algorithms where 
the survival of the second layer in non-dominated sorting is crucial for the efficiency 
of optimisation. Our work is also the first to rigorously illustrate that popular 
EMO algorithms can outperform (G)SEMO on a deterministic function,
as previously this was only shown for noisy fitness functions \cite{Dang2023a}
or for the opposite statement \cite{Doerr2023}. 
%

\section{Preliminaries}\label{sec:prelim}

%
The sets of integer and natural numbers are $\Z$ and $\N$ respectively.
%
For $n \in \N$, define $[n] := \{1,\dots,n\}$ and
$[n]_0 \coloneqq [n] \cup \{0\}$.
The natural logarithm is denoted $\ln(\cdot)$ and that of base-2 is $\log(\cdot)$.
%
%
For a bit string $x:=(x_1,\dots,x_n)\in\{0,1\}^n$, we use $\ones{x}$ to denote
its number of $1$-bits, \ie $\ones{x}=\sum\nolimits_{i=1}^{n}x_i$,
and similarly $\zeroes{x}$ to denotes its number of zeroes,
\ie $\zeroes{x}=\sum\nolimits_{i=1}^{n}(1-x_i)=n-\ones{x}$.
We use standard asymptotic notation with symbols $O, \Omega, o$~\cite{Cormen2009}.

\begin{definition}
Consider a $d$-objective function $f\colon\neweditx{\{0,1\}}^n\rightarrow \Z^{d}$:
\begin{itemize}
\item For $x, y \in \{0, 1\}^n$, we say $x$ \emph{weakly dominates} $y$
      written as $x \succeq y$ (or $y \preceq x$) if $f_i(x) \geq f_i(y)$ for all $i\in[d]$;
      $x$ \emph{dominates} $y$ written as $x \succ y$ (or $y \prec x$)
      if one inequality is strict.
\item A set of points which covers all possible fitness values not
      dominated by any other points in $f$ is called \emph{Pareto front}
      \newedit{or a \emph{Pareto-optimal set} of $f$}. 
      A single point from the Pareto front is called \emph{Pareto optimal}.
\end{itemize}
\end{definition}

Note that both
the weakly-dominance and dominance relations are \emph{transitive},
\eg $x \succ y \wedge y \succ z$ implies $x \succ z$. 
\neweditx{For a discrete function $f$ as defined above,} 
we say an algorithm $\mathcal{A}$ has optimised $f$ if it have produced 
(or output) a Pareto-optimal set of $f$. 

The $\TRAP(x):=\sum\nolimits_{i=1}^{n} x_i + (n+1)\prod_{i=1}^{n} (1-x_i)$ function was
analysed in the seminal paper \cite{Droste2002} to illustrate that 
the general upper bound $O(n^n)$ on the expected running time of the ($1$+$1$)~EA 
is tight. It is therefore the most difficult pseudo-Boolean function for 
elitist EAs. 
It is a \emph{function of unitation}, \neweditx{\ie} $f(x)$ is a function of $|x|_1$.
To generalise \TRAP to two objectives, it suffices to consider the same 
function but applying to both $|x|_1$ and $|x|_0$:
\begin{align*}
\OTZTfull(x):= \left(\sum\nolimits_{i=1}^{n} x_i + (n+1)\prod_{i=1}^{n} (1-x_i),\right.\\
               \left.\sum\nolimits_{i=1}^{n} (1-x_i) + (n+1)\prod_{i=1}^{n} x_i \right).
\end{align*}
This function is denoted as \OTZT for short, and Figure~\ref{fig:otzt-20} illustrates 
\newedit{it} for $n=20$. Its unique Pareto-optimal set is $F:=\{0^n, 1^n\}$, with 
$\OTZT(0^n)=(n+1,n)$ and $\OTZT(1^n)=(n,n+1)$. \neweditx{The function can be generalised 
to a class of functions by swapping the roles of $1$ and $0$ in some bit positions 
(\eg see \cite{Dang2024} for a similar generalisation), and all our runtime results 
still hold for the whole class.} 
 
An existing function with a similar structure is \OJZJfull \cite{Doerr2021} with its
gap length parameter extended\footnote{Paper \cite{Doerr2021} only defines \OJZJfull 
with the gap lengths below $n/2$ because the structure of the function drastically 
changes otherwise.}
to $n$, however the fitness values of Pareto-optimal points are $(2n,n)$ and $(n,2n)$. 
The two Pareto-optimal points of \OTZT dominate the remaining points of search 
space while the latter points cannot strictly dominate each other. 
These are put formally as follows.

\begin{lemma}\label{lem:otzt-dominance}
On $\OTZTfull$, if $F=\{0^n, 1^n\}$, then: 
\begin{itemize}
\item $\forall x \in F, \forall y \in \{0,1\}^n \setminus F\colon x \succ y$.
\item \newedit{$\forall x, y \in F\colon \neg (x \succ y) \wedge \neg (y \succ x)$.}
\item $\forall x, y \in \{0,1\}^n \setminus F\colon \neg (x \succ y) \wedge \neg (y \succ x)$.
\end{itemize}
\end{lemma}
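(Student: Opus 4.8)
The plan is to evaluate $\OTZT$ explicitly on the three relevant kinds of points and then verify each bullet by a direct componentwise comparison. The key preliminary observation is that for any $x \in \{0,1\}^n \setminus F$ we have $0 < \ones{x} < n$, so both product terms $\prod_{i=1}^n (1-x_i)$ and $\prod_{i=1}^n x_i$ vanish; hence $\OTZT(x) = (\ones{x}, \zeroes{x}) = (\ones{x}, n - \ones{x})$ with $1 \le \ones{x} \le n-1$. Combined with the values $\OTZT(0^n) = (n+1, n)$ and $\OTZT(1^n) = (n, n+1)$ already recorded in the text, this reduces everything to arithmetic on the coordinates.

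For the first bullet, I would exploit the symmetry of $\OTZT$ under simultaneously swapping the two objectives and complementing all bits, so that it suffices to treat $x = 0^n$. Then a componentwise comparison with an arbitrary $y \notin F$ gives: the first coordinate of $x$ is $n+1$, which strictly exceeds $\ones{y} \le n-1$; the second coordinate of $x$ is $n$, which is at least $n - \ones{y}$ for every $y$. Hence $x \succeq y$ with strict inequality in the first coordinate, i.e. $x \succ y$. The case $x = 1^n$ is the mirror image.

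For the second bullet I simply compare $\OTZT(0^n) = (n+1, n)$ with $\OTZT(1^n) = (n, n+1)$: the former is strictly larger in the first coordinate and strictly smaller in the second, so neither weakly dominates the other. For the third bullet, suppose for contradiction $x \succ y$ for some $x, y \notin F$. From the first coordinate $\ones{x} \ge \ones{y}$, and from the second $n - \ones{x} \ge n - \ones{y}$, i.e. $\ones{x} \le \ones{y}$; hence $\ones{x} = \ones{y}$ and therefore $\OTZT(x) = \OTZT(y)$, contradicting the strict-inequality requirement in the definition of $\succ$. The roles of $x$ and $y$ being symmetric, $\neg(y \succ x)$ follows as well.

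The argument is entirely elementary, and I do not expect a genuine obstacle; the only point that deserves to be stated carefully is \emph{why} the product terms are $0$ off $F$ (namely $\ones{x} \notin \{0, n\}$ there), since this is exactly what collapses $\OTZT$ to the pair $(\ones{x}, n - \ones{x})$ and makes the mutual incomparability in the third bullet immediate.
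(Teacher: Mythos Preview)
Your argument is correct. The paper actually states Lemma~\ref{lem:otzt-dominance} without proof, treating it as an immediate consequence of the definition; your write-up supplies exactly the elementary componentwise verification one would expect, so there is nothing to compare against.
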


\begin{figure}[ht]\ifarxiv\centering\fi
\begin{tikzpicture}[x=1em,y=1em,scale=0.7,transform shape]
    \foreach \i in {0,...,22} {
        \draw [very thin,gray] (\i,0) -- (\i,22); 
    }
    \foreach \i in {0,...,22} {
        \draw [very thin,gray] (0,\i) -- (22,\i);
    }
    \node[below] at (11,0) {$f_1(x)=\sum\nolimits_{i}^{n}x_i + (n+1)\prod_{i=1}^{n}(1-x_i)$};    
    \node[above,rotate=90]  at (0,11) {$f_2(x)=\sum\nolimits_{i}^{n}(1-x_i) + (n+1)\prod_{i=1}^{n}x_i$};    
    
    \foreach \i in {1,...,19} {
        \node[circle,fill=blue,scale=0.5] at (\i,20 - \i) {};
    }
    \node[circle,fill=red,scale=0.5] at (21,20) {};
    \node[circle,fill=red,scale=0.5] at (20,21) {};
\end{tikzpicture}
\caption{\OTZTfull with $n=20$.}
\ifarxiv\else\Description{This figure illustrates \OTZTfull with $n=20$.}\fi
\label{fig:otzt-20}
\end{figure}

In order to optimise \OTZT efficiently, it is required that the solutions with 
the maximum number of $1$s and $0$s explored so far are not lost. This is captured 
by the following definition. 
\begin{definition}\label{def:0/1-monotone}
Consider algorithm $\mathcal{A}$ optimising a $d$-objective function 
$f\colon\{0,1\}^n \rightarrow \Z^d$ by evolving a population $P_t$ of search points 
at each of its iteration $t\geq 0$. 
$\mathcal{A}$ is called \emph{\ZOmonotone} on $f$ if both of the following statements 
hold: 
\begin{itemize}
\item $\exists x \in P_{t+1}\colon \ones{x} \geq \max\{\ones{z} \mid z \in P_{t}\}$,
\item $\exists y \in P_{t+1}\colon \zeroes{y} \geq \max\{\zeroes{z} \mid z \in P_{t}\}$.
\end{itemize}
\end{definition}


\section{Inefficiency of (G)SEMO algorithm}\label{sec:gsemo}

The GSEMO and SEMO algorithms follow the same algorithmic framework as shown 
in Algorithm~\ref{alg:gsemo}. Starting from one randomly generated solution, in each 
generation a new offspring solution $y$ is created by copying and mutating a parent
solution $p$ selected uniformly at random from the current population $P_t$. In case
of GSEMO, the standard \emph{bitwise mutation operator} is used, that is, each bit copied 
from the parent has the same and independent probability $1/n$ being flipped where $n$ is 
the length of the string. SEMO uses the so-called \emph{local mutation} in which one bit 
selected uniformly at random is flipped. If $y$ is not dominated by any solutions of 
$P_t$ then it is added to $P_t$, then the next population $P_{t+1}$ is formed by removing 
all those weakly dominated by $y$ from $P_t$. 
The population size
$\lvert{P_t}\rvert$ is unrestricted for GSEMO and SEMO. 
\neweditx{Here and hereafter, we do not specify a stopping criterion since 
we are only interested in the time to reach some target population.}

\begin{algorithm2e}[ht]
	Initialise $P_0:=\{x\}$ where $x \sim \Unif(\{0,1\}^n)$\;
	\For{$t:= 0 \to \infty$}{
		Sample $p \sim \Unif(P_t)$\;
		Create $y$ by copying then mutating $p$ using
          the standard bitwise (GSEMO)
          or local (SEMO) mutation\; 
		\If{$\not\exists x \in P_t\colon x\succ y$}{
			Create the next population $P_{t+1} := P_t \cup \{y\}$\;
			Remove all $x \in P_{t+1}$ where $y\succeq x$\;
		}
	}
	\caption{(G)SEMO Algorithm on $\{0,1\}^n$ \cite{Laumanns2004}.}
	\label{alg:gsemo}
\end{algorithm2e}

The expected optimisation time of (G)SEMO algorithms on \OTZT is essentially identical to that of the
($1$+$1$)~EA and RLS on \TRAP in the single objective setting. The inefficiency of 
(G)SEMO is caused by a combination of multiple factors. The algorithms 
create only one offspring at each generation, they have no control over the population 
size and they have a strict policy that dominated solutions (even for weak dominance)
must be removed.

\begin{theorem}\label{thm:gsemo}
The expected number of fitness evaluations required by SEMO to optimise \OTZTfull
is infinite while that of GSEMO is dominated by a geometric distribution with 
parameter $n^{-n}$. In other words, GSEMO requires at least $n^n$ expected fitness
evaluations to optimise \OTZTfull. 
\end{theorem}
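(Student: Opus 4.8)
The plan is to derive the population dynamics of Algorithm~\ref{alg:gsemo} directly from the dominance structure of Lemma~\ref{lem:otzt-dominance} and to show that, before the Pareto front $\{0^n,1^n\}$ can be covered, the population is inevitably forced into a single Pareto-optimal point, from which the only escape is a mutation that lands directly on the \emph{other} optimum --- impossible for SEMO and of probability $n^{-n}$ per generation for GSEMO. I would begin by recording some structural facts. By Lemma~\ref{lem:otzt-dominance}, any two non-Pareto-optimal search points are mutually non-dominating, and two such points with the same number of ones in fact have identical fitness $(\ones{x},\zeroes{x})$; hence while $P_t$ holds no optimum a freshly created non-optimal offspring $y$ is never rejected, and upon insertion it removes exactly the points of $P_t$ with $\ones{y}$ ones, so $P_t$ never contains two non-optimal points with the same number of ones. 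Finally, each of $0^n$ and $1^n$ weakly dominates \emph{every} non-optimal point (\eg $\OTZT(0^n)=(n+1,n)\succeq(\ones{x},\zeroes{x})$ for all $x$ with $0<\ones{x}<n$).

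From these facts, consider the first generation $t^\ast$ at which a Pareto-optimal offspring is accepted; \wlo it is $0^n$ (the case $1^n$ is symmetric). Since $P_{t^\ast}$ contains no optimum, every point of $P_{t^\ast}$ is non-optimal, hence weakly dominated by $0^n$ and removed, so $P_{t^\ast+1}=\{0^n\}$; the only alternative is that the random initial point is already $0^n$ or $1^n$, which merely places the run in such a singleton state from the start. In the state $\{0^n\}$ the sole parent is $0^n$, so by the first item of Lemma~\ref{lem:otzt-dominance} any non-optimal offspring is strictly dominated and rejected, a regenerated $0^n$ changes nothing, and only $1^n$ is accepted, giving the optimal population $\{0^n,1^n\}$. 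Thus the population stays at $\{0^n\}$ until $1^n$ is created from $0^n$ by flipping all $n$ bits: under GSEMO this has probability exactly $n^{-n}$ per generation, under SEMO it has probability $0$. I would also observe that the front $\{0^n,1^n\}$ cannot be reached without first passing through such a singleton, since the penultimate population must consist of one optimum together with points weakly dominated by it and is therefore, by the facts above, exactly $\{0^n\}$ or $\{1^n\}$.

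Putting this together yields the theorem: SEMO is absorbed at $\{0^n\}$ or $\{1^n\}$ forever (or never generates an optimum at all), so it never covers the front and its expected optimisation time is infinite; for GSEMO the runtime equals $\tau+G$, where $\tau$ is the first time a Pareto-optimal singleton is reached and, given $\tau<\infty$, $G$ is geometric with parameter $n^{-n}$ and independent of $\tau$ by the memorylessness of the trap state, so the runtime stochastically dominates a geometric random variable with parameter $n^{-n}$ and in particular $\E{T}\ge n^n$. The bulk of the work is structural bookkeeping; the point I expect to need the most care is ruling out a ``shortcut'' to $\{0^n,1^n\}$ that bypasses the singleton state --- this reduces to checking the possible shapes of $P_{t^\ast}$ at the first acceptance of an optimum and to the corner case (probability $2^{1-n}$) where the initial population is already a Pareto-optimal singleton --- together with stating the stochastic-domination conclusion cleanly from the i.i.d.\ nature of the attempts in the trap.
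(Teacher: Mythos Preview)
Your proposal is correct and follows essentially the same approach as the paper: once a Pareto-optimal point is first created, the population collapses to that singleton by Lemma~\ref{lem:otzt-dominance}, and escape then requires flipping all $n$ bits, which is impossible for SEMO and has probability $n^{-n}$ for GSEMO. Your treatment is more careful than the paper's brief proof---you explicitly handle the corner case of an optimal initial point, rule out shortcuts via the penultimate-population argument, and phrase the stochastic-domination conclusion precisely---but the underlying argument is identical.
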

\begin{proof}
Both algorithms start out with a single solution and then evolve their population by 
creating one offspring solution in each iteration. Therefore, the two solutions 
of the Pareto set $F$ are created at different iterations. Assume that $0^n$ is 
created first at some iteration $t\geq 0$. Then, by Lemma~\ref{lem:otzt-dominance}, the 
population collapses to size $1$ until the other Pareto-optimal point $1^n$ 
is created. The latter event requires flipping all the bits, and this is impossible 
for SEMO since the single-bit flip mutation is used. Thus, the expected running time for SEMO
is infinite. For GSEMO, the probability of flipping all bits is $1/n^n$ and the 
expected waiting time is $n^n$. The same reasoning applies when $1^n$ is created first. 
\end{proof}

\section{Analysis of NSGA-II}\label{sec:nsga}


\nsga \cite{Deb2002,NSGAIICode2011} and \nsgaiii~\cite{Deb2014} 
using mutations as the variation operator are summarised in Algorithm~\ref{alg:nsga-ii}. 
Both algorithms use the so-called $(\mu+\mu)$ elitist EA scheme, in which starting
from a randomly initialised population of $\mu$ solutions, in each generation 
new $\mu$ offspring solutions (the population $P_t$) are generated then the 
$2\mu$ solutions (the population $R_t = P_t \cup Q_t$) compete for survival 
in the next population $P_{t+1}$ which is again has size of $\mu$. 
\neweditx{For simplicity we do not consider crossover, 
but our results of polynomial expected running time can be easily 
generalised for the version of the algorithms with any constant crossover probability $p_c \in [0, 1)$.}

We use the most basic choices for the parent selection and mutation operators
to generate the offspring: each offspring solution is created independently
from each other by first picking a parent uniformly at random (with replacement) 
then applying a mutation operator to the copied parent. Like SEMO and GSEMO, we consider 
the standard bitwise mutation and local mutation operators. 

%

In survival selection,
the parent and offspring populations $P_t$ and $Q_t$ are joined into $R_t$,
and then partitioned into
layers 
$F^1_{t+1},F^2_{t+1},\dots$ by the \emph{non-dominated sorting algorithm}~\cite{Deb2002}.
The layer $F^1_{t+1}$ consists of all non-dominated points,
and $F^i_{t+1}$ for $i>1$ only contains points that are dominated by
those from the previous layers $F^1_{t+1},\ldots,F^{i-1}_{t+1}$. The algorithm
then identifies a critical layer $F_t^{i^*}$ such that 
    $\sum\nolimits_{i=1}^{i^*-1} \lvert{F_{t+1}^i}\rvert < \mu$ 
    and $\sum\nolimits_{i=1}^{i^*} \lvert{F_{t+1}^i}\rvert \geq \mu$. 
Solutions from layers $F^1_{t+1},\ldots,F^{i^*-1}_{t+1}$ are then added \neweditx{to}
the next population $P_{t+1}$. The remaining $r:=\mu-\sum\nolimits_{i=1}^{i^*-1} \lvert{F_{t+1}^i}\rvert$
slots of $P_{t+1}$ (if $r>0$) are then taken by solutions from $F_{t+1}^{i^*}$,
and the criterion used for selecting those solutions is where NSGA-II and NSGA-\neweditx{III}
differ (line~\ref{alg:nsga-ii-critical-layer}): NSGA-II uses the crowding distance while NSGA-III uses the distance to 
a predefined set of reference rays after a normalisation procedure. 

The selection of 
solutions in $F_{t+1}^{i^*}$ for NSGA-III is given in the 
next section. For NSGA-II it works as follows. Let $M:=(x_1,x_2,\dots,x_{|M|})$ 
be a multi-set of search points. The crowding distance $\cdist(x_i,M)$ of $x_i$ 
with respect to $M$ is computed as follows. At first sort $M$ as 
$M=(x_{k_1},\dots,x_{k_{\vert{M}\vert}})$ with respect to 
each objective $k \in [d]$
separately. Then
$\cdist(x_i, M)
	:= \sum\nolimits_{k=1}^{d} \cdist_{k}(x_i, M)$, 
	 where
\begin{align}
	\cdist_{k}(x_{k_i}, M)
	&\!:= \! \begin{cases}
		\infty\; & \text{if } i \in \{1, |M|\},\\
		\frac{f_k\left(x_{k_{i-1}}\right) - f_k\left(x_{k_{i+1}}\right)}{f_k\left(x_{k_1}\right) - f_k\left(x_{k_{M}}\right)} & \text{otherwise.}
	\end{cases}\!\!\!\!\label{eq:crowd-dist-eachdim}
\end{align}
The first and last ranked individuals are always
assigned an infinite crowding distance. The remaining individuals
are then assigned the differences between the values of $f_k$ of
those ranked directly above and below the search point and normalized
by the difference between $f_k$ of the first and last ranked. 
NSGA-II then takes $r$ solutions from $F_t^{i^*}$ with the largest computed 
crowding distances from $F_t^{i^*}$ to complete $P_{t+1}$ where ties are broken 
uniformly at random.

So far we have described the original NSGA-II, referred as the \emph{vanilla} 
version of the algorithm and hereafter we use the same term for other algorithms
to refer to their original versions without any enhancement. 
When enhanced with the diversity mechanism of
avoiding genotype duplication (line~\ref{alg:nsga-ii-avoid-duplicate}), 
we only assume: When the solutions in $Q_t$
are merged with $P_t$ to create $R_t$, first duplicated solutions in $Q_t$ are 
reduced to single copies then only those from $Q_t$ that does not exist in $P_t$ 
yet are put into $R_t$ along with $P_t$. So, even though always $\mu$ offspring 
are created, the size of $R_t$ is not necessarily $2\mu$. This requirement is 
minimal in the sense that we do not require the initial population $P_0$ 
to be free of duplicates.

\begin{algorithm2e}[h]
	Initialise $P_0 \sim \Unif( (\{0,1\}^n)^{\mu})$\;
	\For{$t:= 0$ to $\infty$}{
		Initialise $Q_t:=\emptyset$\;
		\For{$i=1$ to $\mu$}{
			Sample $p$ from $P_t$ uniformly at random \label{line:nsga-iii:selection}\;
			Create $x$ by mutating a copy of $p$\;
			Update $Q_t:=Q_t \cup \{x\}$\;
		}
		Set $R_t := P_t \cup Q_t$. \newedit{In case of avoiding genotype duplication: first reduce $Q_t$
        to a proper set, then only $x\in Q_t$ where $x \notin P_t$ are put into $R_t$ along with $P_t$}\label{alg:nsga-ii-avoid-duplicate}\;
		Partition $R_t$ into layers $F^1_{\newedit{t+1}},F^2_{\newedit{t+1}},\ldots$ 
        \newedit{using the non-dominated sorting algorithm}~\cite{Deb2002}\;
            Compute the critical layer $i^* \geq 1$ such that $\sum\nolimits_{i=1}^{i^*-1} \lvert{F_{t+1}^i}\rvert < \mu$ and $\sum\nolimits_{i=1}^{i^*} \lvert{F_{t+1}^i}\rvert \geq \mu$\;
            Set $Y_t := \bigcup_{i=1}^{i^*-1} F_{t+1}^i$\label{alg:nsga-ii:Y_t}\;
            Select a multiset $\tilde{F}_{t+1}^{i^*} \subset F_{t+1}^{i^*}$ of individuals such that $\lvert{Y_t \cup \tilde{F}_{t+1}^{i^*}}\rvert = \mu$: 
                use crowding distance for NSGA-II 
                and distances to reference points (Algorithm~\ref{alg:ref-points}) for NSGA-III\label{alg:nsga-ii-critical-layer}\;
		Create the next population $P_{t+1} := Y_t \cup \tilde{F}_{t+1}^{i^*}$\;
	}
	\caption{
    NSGA-II~\cite{Deb2002} and NSGA-III~\cite{Deb2014} on $\{0,1\}^n$,
    \newedit{with or without avoiding genotype duplication.}}
	\label{alg:nsga-ii}
\end{algorithm2e}


\begin{lemma}\label{lem:nsgaii-progress}
NSGA-II 
    with $\mu\geq \neweditx{4}$,  
    avoiding genotype duplication 
    \newedit{and with $P_0\cap \{0^n,1^n\}=\emptyset$}
is \ZOmonotone on \OTZTfull. 
\end{lemma}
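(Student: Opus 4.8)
The plan is to exploit the very simple dominance structure of \OTZT given by Lemma~\ref{lem:otzt-dominance}: the only non-trivial domination is that each of $0^n$ and $1^n$ dominates every non-Pareto-optimal point, while all non-Pareto-optimal points, and also $0^n$ and $1^n$ among themselves, are mutually incomparable. Hence, on the combined parent-and-offspring population $R_{t+1}$, non-dominated sorting produces at most two non-empty layers, and their shape depends only on how many of $\{0^n,1^n\}$ lie in $R_{t+1}$. Since selection in \nsga keeps $P_t$ inside $R_{t+1}$ (merging only discards offspring that duplicate existing genotypes), it suffices to show in each case that some individual maximising $\ones{\cdot}$ over the relevant critical layer, and some individual maximising $\zeroes{\cdot}$, survives into $P_{t+1}$, and that this layer still contains such a witness of value at least $\max_{z\in P_t}\ones{z}$ (resp.\ $\max_{z\in P_t}\zeroes{z}$).

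First I would prove, by induction on $t$, the invariant that $P_t$ (and consequently $R_{t+1}$) contains at most one copy of $0^n$ and at most one copy of $1^n$: the base case is $P_0\cap\{0^n,1^n\}=\emptyset$, and the step uses that offspring duplicates are reduced to single copies and that offspring already present in $P_t$ are not re-added, so the multiplicity of $0^n$ (and of $1^n$) in $R_{t+1}$ never exceeds its multiplicity in $P_t$; the claim then passes to $P_{t+1}\subseteq R_{t+1}$. This invariant rules out the pathology --- possible for the vanilla algorithm --- in which the population fills up with copies of a single Pareto-optimal point and loses all individuals carrying many $1$-bits (or many $0$-bits).

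Next I would distinguish three cases by $|R_{t+1}\cap\{0^n,1^n\}|$. If $R_{t+1}$ contains no Pareto-optimal point, all of $R_{t+1}$ forms the single first layer, the critical layer is that whole layer, and $\mu$ individuals are kept by crowding distance; here I use that the individual of maximum $f_1$ in $R_{t+1}$ (which has $\ones{\cdot}$ at least $\max_{z\in P_t}\ones{z}$) and the individual of maximum $f_2$ receive infinite $\cdist$, that in total at most four individuals receive infinite $\cdist$ (the first and last in each of the two objective-wise orders), and that $\mu\ge 4$ therefore guarantees survival of all of them, yielding witnesses for both requirements. If $R_{t+1}$ contains exactly one Pareto-optimal point, say $0^n$ (the case $1^n$ is symmetric), then by the invariant $F^1_{t+1}=\{0^n\}$ and $F^2_{t+1}$ is exactly the set of non-Pareto-optimal points of $R_{t+1}$, so the critical layer is $F^2_{t+1}$, from which $\mu-1\ge 3$ individuals are kept by crowding distance; the surviving $0^n$ already witnesses the $\zeroes{\cdot}$-requirement with value $n$, and for the $\ones{\cdot}$-requirement one checks that $F^2_{t+1}$ still contains every non-$0^n$ point of $P_t$, and that among the at most four infinite-$\cdist$ individuals at most two attain the maximal $f_1$-value and at most two the minimal, so any $\mu-1\ge 3$ of them must include one of maximal $f_1$. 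Finally, if $R_{t+1}$ contains both $0^n$ and $1^n$, then $F^1_{t+1}=\{0^n,1^n\}$ has size $2<\mu$, so both survive outright and directly witness the two requirements (the Pareto set is then found and kept forever by \ZOmonotone-ity).

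The delicate point, and the place where the exact bound $\mu\ge 4$ is used, is the last step of the middle case: with $\mu=4$ only $r=3$ of the critical-layer slots are allotted by crowding distance although up to four individuals may tie at infinite $\cdist$, so ``all extreme individuals survive'' is false, and the pigeonhole count of how many infinite-$\cdist$ individuals can carry the maximal versus the minimal $f_1$-value (at most two each) is what saves the argument. I would also need to verify that this count is robust to how ties are broken during the objective-wise sorting, and to handle the degenerate situation in which every point of the critical layer shares the same $f_1$-value, where the normalising denominator in the crowding-distance formula~\eqref{eq:crowd-dist-eachdim} vanishes and the corresponding convention must be pinned down.
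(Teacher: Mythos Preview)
Your proposal is correct and follows essentially the same approach as the paper: the same case split on how many of $\{0^n,1^n\}$ lie in the merged population, the same observation that at most four individuals receive infinite crowding distance (the endpoints of the two objective-wise orders), and the same pigeonhole argument that any three of these four must include both extreme fitness vectors, which is exactly what is needed when only $\mu-1\ge 3$ slots remain in the single-optimum case. Your invariant about at most one copy of each Pareto-optimal point is also precisely how the paper uses the assumption $P_0\cap\{0^n,1^n\}=\emptyset$ together with duplicate rejection.

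One small wording slip: in your inductive step you write that the multiplicity of $0^n$ in $R_{t+1}$ ``never exceeds its multiplicity in $P_t$'', but that is false at the generation where $0^n$ is first created (multiplicity goes from $0$ to $1$). The correct statement is that the multiplicity in $R_{t+1}$ is at most $\max(1,\text{multiplicity in }P_t)$, which still yields the invariant ``at most one copy'' by induction. This is a phrasing issue, not a conceptual one. Your flagged concerns about tie-breaking in the objective-wise sorting and the degenerate all-equal-$f_1$ layer are legitimate edge cases that the paper does not spell out; they are easily handled (in the degenerate case any survivor witnesses both requirements), and recognising them is to your credit.
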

\begin{proof}
At any generation $t\geq 0$, let $i:=\max\{\ones{z} \mid z\in F^{i^*}_{t+1}\}$ and 
$j:=\max\{\zeroes{z}\mid z\in F^{i^*}_{t+1}\}$. 
Consider the following cases: 

If $R_t \cap F=\emptyset$: no Pareto-optimal solutions have been created yet. 
By Lemma~\ref{lem:otzt-dominance} none of the solutions in $R_t$ can dominate each 
other thus they all belong to the first layer $F^1_{t+1}$, in other words $F^1_{t+1}=F^{i^*}_{t+1}=R_t$. 
The ranking of these solutions 
then uses the crowding distance calculation. \newedit{In this calculation, if the sorted 
population according to the first objective is $(S_1[1],\dots,S_1[\mu])$, then it holds 
that $f(S_1[1])=(i, n-i)$ and $f(S_1[\mu])=(n-j,j)$ and that $S_1[1]$ and $S_2[\mu]$
are assigned infinite crowding distances.}
\newedit{Similarly, in population $(S_2[1],\dots,S_2[\mu])$ which is sorted according 
to the second objective, $S_2[1]$ and $S_2[\mu]$ are assigned infinite crowding distances with
$f(S_2[1])=(n-j, j)$ and $f(S_2[\mu])=(i,n-i)$. Thus the solutions from the set 
$\{S_1[1], S_2[\mu], S_2[1], S_2[\mu]\}$ are the only ones with infinite crowding distances} 
\neweditx{and, consequently, they are top ranked.} 
\neweditx{Particularly, in any three solutions of that set} 
there are always at least two with fitness $(i, n-i)$ and $(n-j, j)$. 
Because $\mu\geq \neweditx{4>3}$ slots are available in $P_{t+1}$ \newedit{and duplicates are 
rejected, one copy of} the top \neweditx{three} solutions is guaranteed to be kept in $P_{t+1}$. 
Among them, we have \neweditx{a solution $x$} with $\ones{x}=i \geq \max\{\ones{z} \mid z\in P_{t}\}$ 
and the last inequality follows from that $P_t \subseteq R_t=F^{i^*}_{t+1}$. 
Likewise,
\neweditx{we keep some $y$ with} $\zeroes{y}=j \geq \max\{\zeroes{z}\mid z\in P_{t}\}$. 

If $R_t \cap F\neq\emptyset$: Pareto-optimal solutions have been created. If both
$0^n$ and $1^n$ are in $R_t$ then the results are obvious because these optimal solutions 
are kept in $P_{t+1}$ and they have a maximum numbers of $1$s and $0$s, respectively.
If only one of them is in $R_t$, \newedit{we first note that they can only have one copy
in $R_t$ because given that $P_0 \cap \{0^n,1^n\}=\emptyset$ then the Pareto-optimal
solution can only be generated in $Q_{t'}$ at some generation $t'\leq t$ and the diversity 
mechanism reduces its (possible) duplicates into a single copy before \neweditx{integration} into 
$R_t'$.}
By Lemma~\ref{lem:otzt-dominance} that Pareto-optimal solution, denoted by $z$,  
is in the first non-dominated layer $F^1_{t+1}$ while the rest of $R_t$ are in the next 
non-dominated layer $F^2_{t+1}=F^{i^*}_{t+1}$. 
The solutions in $F^2_{t+1}$ then compete for the remaining $\mu':=\mu-|F^1_{t+1}| 
\geq \neweditx{3}$ slots in $P_{t+1}$. Repeating the same argument as in the previous case 
but for layer $F^2_{t+1}$ and the available $\mu'$ slots implies that two solutions $x$ and 
$y$ with the maximum numbers of $1$s and $0$s respectively in $F^2_t$ survive to $P_{t+1}$. 
The proof is completed by noting that the surviving solutions $x,y,z$ contain the 
maximum numbers of $1$s and $0$s of $R_t$ since $R_t=F^1_{t+1}\cup F^2_{t+1}$ and that
$R_t\supseteq P_t$.
\end{proof}

The lemma guarantees the progress of the algorithm on \OTZT. 
The requirement $\mu \ge \neweditx{4}$ stems from the fact \neweditx{that in the worst case we need 
one slot to store the first Pareto-optimal found in the top layer and three other 
slots to store solutions with infinite crowding distances in the next layer. 
This is because there can be four such solutions (\ie $S_1[1],S_2[\mu],S_2[1],S_2[\mu]$) 
and any three of those will cover the two extreme points for that layer.}
\neweditx{This reasoning holds regardless of which sorting algorithm is 
used for computing the crowding distances.}

\begin{theorem}\label{thm:nsgaii}
NSGA-II with $\mu\geq \neweditx{4}$ and using \newedit{the standard} bitwise mutation or 
\newedit{the local mutation} 
as the variation operator and the diversity mechanism of avoiding genotype duplication 
optimises \OTZTfull in $\bigO(\mu n\log{n})$ \neweditx{expected} fitness evaluations. 
\end{theorem}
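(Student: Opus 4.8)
The plan is to combine Lemma~\ref{lem:nsgaii-progress} with a standard fitness-level (coupon-collector) argument, run separately for the two objectives of \OTZT. Write $a_t := \max\{\ones{z}\mid z\in P_t\}$ and $b_t := \max\{\zeroes{z}\mid z\in P_t\}$. Being \ZOmonotone already yields $a_{t+1}\ge a_t$ and $b_{t+1}\ge b_t$, but for the running-time bound I want the slightly stronger fact that is in fact proved inside Lemma~\ref{lem:nsgaii-progress}: $P_{t+1}$ always retains a search point attaining $\max\{\ones{z}\mid z\in R_t\}$ and one attaining $\max\{\zeroes{z}\mid z\in R_t\}$. Two consequences I would record first: (a)~as soon as some offspring in $Q_t$ has strictly more $1$-bits than every member of $P_t$, the value $a_t$ strictly increases in the next generation, and symmetrically for $0$-bits; and (b)~once $0^n$ or $1^n$ has entered the population it is never removed, being Pareto-optimal and extremal in one objective. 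The hypothesis $P_0\cap\{0^n,1^n\}=\emptyset$ of Lemma~\ref{lem:nsgaii-progress} is not assumed in the theorem, but this is harmless: if $P_0$ already contains $0^n$ and/or $1^n$, such a point is kept forever and, from generation~$1$ on, the genotype-duplication mechanism leaves at most one copy of it in $R_t$, so the monotonicity argument applies verbatim from $t=1$; I would dispose of this in one or two lines at the outset.

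For the fitness-level step towards $1^n$ (the argument for $0^n$ is identical with the roles of $0$ and $1$ exchanged), suppose $a_t=a<n$ and fix $x^\star\in P_t$ with $\ones{x^\star}=a$. In generation~$t$ the algorithm creates $\mu$ offspring independently, each by picking a parent uniformly from $P_t$ (so $x^\star$ with probability at least $1/\mu$) and mutating it. The probability that one given offspring picks $x^\star$ and turns it into a string with $a+1$ ones is at least $\tfrac{1}{\mu}\cdot\tfrac{n-a}{e n}$ for standard bitwise mutation (flip exactly one of the $n-a$ zero-bits and no other bit, using $(1-1/n)^{n-1}\ge 1/e$) and at least $\tfrac{1}{\mu}\cdot\tfrac{n-a}{n}$ for the local mutation, hence at least $\tfrac{1}{\mu}\cdot\tfrac{n-a}{e n}$ in both cases. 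Therefore the probability that none of the $\mu$ offspring reaches $a+1$ ones is at most $\bigl(1-\tfrac{n-a}{e\mu n}\bigr)^{\mu}\le \exp\bigl(-\tfrac{n-a}{e n}\bigr)\le 1-\tfrac{n-a}{2e n}$, where the last step uses $1-e^{-x}\ge x/2$ on $(0,1]$ together with $\tfrac{n-a}{e n}\le\tfrac1e$. So $a_t$ grows by at least one with probability at least $\tfrac{n-a}{2e n}$ per generation; summing the expected waiting times $\tfrac{2e n}{n-a}$ over $a=0,\dots,n-1$ gives $O(n\log n)$ generations in expectation until $1^n\in P_t$, and the same bound until $0^n\in P_t$. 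Since neither optimum is ever lost, the population covers $\{0^n,1^n\}$ after at most the sum of these two numbers of generations, \ie after $O(n\log n)$ generations in expectation. As each generation uses $\mu$ fitness evaluations (plus the $\mu$ for $P_0$), this totals $\bigO(\mu n\log n)$ expected evaluations.

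I do not expect a genuine obstacle: the hard part has been settled in Lemma~\ref{lem:nsgaii-progress}, and the rest is a classical fitness-level argument. The points that still need care are (i)~the precise interface with Lemma~\ref{lem:nsgaii-progress} --- one must use that progress is preserved relative to $R_t$, not merely relative to $P_t$, so that a single lucky offspring genuinely advances the frontier and is not discarded by the crowding-distance tie-breaking; (ii)~the short reduction handling $P_0\cap\{0^n,1^n\}\neq\emptyset$; and (iii)~stating the elementary success-probability estimate so that the same lower bound $\tfrac{n-a}{e\mu n}$ covers both the bitwise and the local mutation operator.
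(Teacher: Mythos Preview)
Your proposal is correct and follows essentially the same route as the paper: invoke Lemma~\ref{lem:nsgaii-progress} for monotonicity, then run a standard fitness-level argument in each objective, bounding the per-generation success probability by $\tfrac{n-a}{e\mu n}$ for a single offspring and amplifying over the $\mu$ offspring to obtain $O(n\log n)$ generations and hence $O(\mu n\log n)$ evaluations.

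The differences are only technical. The paper amplifies via $1-(1-s_i)^\mu\ge \frac{\mu s_i}{\mu s_i+1}$ (citing~\cite{Badkobeh2015}) and sums $\sum(1+1/(\mu s_i))$, while you use $(1-s_i)^\mu\le e^{-\mu s_i}$ together with $1-e^{-x}\ge x/2$ on $(0,1]$; both yield the same asymptotic bound. You are also more explicit than the paper on two points the paper leaves implicit: that Lemma~\ref{lem:nsgaii-progress} really guarantees survival of the extremes of $R_t$ (so a lucky offspring is not discarded), and the case $P_0\cap\{0^n,1^n\}\neq\emptyset$. On the latter, your one-line reduction is slightly loose---the duplication filter acts only on $Q_t$, so if $P_0$ happened to contain several copies of $0^n$ these would persist in $P_t$---but this event has probability $O(\mu^2 2^{-n})$ and affects the expectation only negligibly, so the argument is easily patched.
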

\begin{proof}
By Lemma~\ref{lem:nsgaii-progress}, the progress in both numbers of $1$s and $0$s is 
not lost, owing to the choice of the population size and the diversity mechanism. It remains
to estimate the expected number of generations until both $0^n$ and $1^n$ of $F$ are present
in the population. 

At generation $t$, define $i:=\max\{\ones{z} \mid z\in P_{t}\}$. In one offspring production,
to create an individual with more $1$s, it suffices to pick one individual $x$ 
with $|x|_1=i$ then flipping one of its $0$s while keeping the rest unchanged, this occurs with 
probability $\frac{1}{\mu}\cdot {n-i \choose 1} \frac{1}{n}\left(1-\frac{1}{n}\right)^{n-1}
\geq \frac{n-i}{e\mu n}=:s_i$ for \newedit{the standard} bitwise mutation, and respectively $\frac{n-i}{\mu n}>s_i$ 
for \newedit{the local mutation}. 
With $\mu$ offspring productions, the probability is amplified
to $1-(1-s_i)^{\mu}\geq \frac{\mu s_i}{\mu s_i + 1}$ by Lemma~10 in~\cite{Badkobeh2015}. 
Therefore, starting from the initial population with $i_0:=\max\{\ones{z} \mid z\in P_{0}\}$,
the expected number of generations to create the $1^n$ solution is no more than:
\begin{align*}
\sum\nolimits_{i=i_0}^{n-1} \left(1 + \frac{1}{\mu s_i}\right)
  \leq \sum\nolimits_{i=0}^{n-1} \left(1 + \frac{en}{n-i}\right)
  = \bigO(n\log{n}).
\end{align*}

Similarly, starting from  $j_0:=\max\{\zeroes{z} \mid z\in P_{0}\}$, the expected number of 
generations to create the $0^n$ solution is also $\bigO(n\log n)$. This is also the overall
expected number of generations to optimise the function. The result in terms on fitness 
evaluations follows by noting that $\mu$ evaluations are required in each generation.
\end{proof}

The result implies that NSGA-II avoiding genotype duplication in its population 
can optimise $\OTZT$ in $O(n\log{n})$ \newedit{evaluations} using only a constant 
population size, even with the use of local mutation instead of the standard bitwise mutation. 
The efficiency of the algorithm on the function is partially attributed to the use of 
crowding distances to classify the solutions at the critical layer, and particularly to assignment
of infinite distances to the extreme points. 
However, if no diversity mechanism is used, then even for any population size $\mu=o(\sqrt{n})$ 
the expected running time of both NSGA-II and NSGA-III with bitwise mutation is 
\newedit{already} $\Omega(n^{n})$.

\begin{theorem}\label{thm:nsgaii-no-diversity}
The vanilla NSGA-II \newedit{and NSGA-III} (without avoidance of genotype duplicates) with $\mu=o(\sqrt{n})$ 
and using uniform parent selection and bitwise mutation requires \newedit{$\Omega(n^{n})$} 
 expected fitness evaluations to optimise \OTZTfull. 
\end{theorem}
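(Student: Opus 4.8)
The plan is to exhibit a ``trap state'' that the vanilla algorithms enter with probability $\Omega(1)$ and from which escape requires a single mutation flipping all $n$ bits. Concretely, I will show that with probability $\Omega(1)$ there is a generation at which the whole population consists of $\mu$ identical copies of one Pareto-optimal point --- say $0^n$; the case $1^n$ is symmetric --- while $1^n$ has not yet been generated. In such a state every parent is $0^n$, an offspring equals $1^n$ only if bitwise mutation flips all $n$ bits (probability $n^{-n}$), and every other offspring is non-optimal, hence dominated by $0^n$ (Lemma~\ref{lem:otzt-dominance}) and placed in the second layer $F^2$, below the $\ge\mu$ copies of $0^n$ in $F^1$, so it is discarded; thus the population stays $\mu$ copies of $0^n$ until $1^n$ appears. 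The first appearance of $1^n$ therefore takes at least $n^n/\mu$ generations in expectation, i.e.\ at least $n^n$ fitness evaluations, which combined with the $\Omega(1)$ probability of reaching the trap gives the claimed $\Omega(n^n)$ bound for both NSGA-II and NSGA-III.

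It remains to argue the trap is reached. By the symmetry of \OTZT under complementing all bits I may assume the first of $0^n,1^n$ to be generated is $0^n$, which (up to the low-probability event of a ``simultaneous'' creation) has probability about $1/2$; let $t_0$ be the generation in which $0^n$ first enters the population, so $1^n$ is still absent then. By Lemma~\ref{lem:otzt-dominance}, as long as $1^n$ is absent the non-dominated layer $F^1$ equals exactly the multiset of copies of $0^n$ present, and all other points lie in $F^2$; this uses only non-dominated sorting, so it holds identically for NSGA-II and NSGA-III, the critical-layer rule (crowding distance resp.\ reference points) being irrelevant. Let $c_t$ be the number of copies of $0^n$ in $P_t$. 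Then (i) $c_t$ is non-decreasing while $1^n$ is absent: if $|F^1|<\mu$ all copies of $0^n$ survive, and if $|F^1|\ge\mu$ then exactly $\mu$ copies of $0^n$ survive, which is at least $c_t$; and (ii) while $c_t<\mu$, each of the $\mu$ offspring equals $0^n$ independently with probability at least $\tfrac{c_t}{\mu}(1-\tfrac{1}{n})^n\ge \tfrac{c_t}{4\mu}$, so $c_t$ grows by a constant factor in expectation each generation. A multiplicative-drift argument (with its tail bound) then shows $c_t$ reaches $\mu$ within $O(\log n)$ generations with probability $1-o(1)$, and once $c_t=\mu$ the state is a fixed point until $1^n$ is created, since again $|F^1|\ge\mu$ forces all $\mu$ survivors to be $0^n$.

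Finally I must rule out that $1^n$ is generated during this $O(\log n)$-generation collapse window. While $1^n$ is absent every individual has at least one $0$-bit, so (conditioning on $1^n$ still being absent) a single offspring equals $1^n$ with probability at most $n^{-1}$; hence the probability that $1^n$ is created among the $O(\mu\log n)$ offspring produced during the window is at most $O(\mu\log n/n)$, which is $o(1)$ precisely because $\mu=o(\sqrt n)$. Putting the pieces together, with probability $\Omega(1)$ the run reaches the trap state, from which the expected remaining number of fitness evaluations is at least $n^n$, so $\mathrm{E}[\text{optimisation time}]=\Omega(n^n)$; since every step used only the dominance structure of \OTZT, non-dominated sorting, and uniform parent selection with bitwise mutation, the argument applies verbatim to NSGA-III.

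The hard part will be making the collapse rigorous: the copies of $0^n$ follow a coupled, truncated branching-type process, and one must combine the monotonicity of $c_t$ with a multiplicative-drift tail bound (or iterated Chernoff bounds) to obtain the ``$O(\log n)$ generations with probability $1-o(1)$'' statement in a form strong enough that the subsequent union bound over the window is still $o(1)$. A secondary subtlety is the symmetry step for NSGA-III --- one should verify the reference-point set is invariant under swapping the two objectives (as the standard Das--Dennis points are) or else restrict the statement --- together with the low-probability event that $0^n$ and $1^n$ are first created in the same generation, which must be shown not to spoil the $\Omega(1)$ probability of the trap.
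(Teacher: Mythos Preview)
Your proposal is correct and reaches the same trap state as the paper, but the takeover step is argued differently. The paper never tracks the copy count $c_t$ or appeals to drift; it treats offspring productions as independent trials, calling a trial \emph{good} if it clones the already-found optimum~$x$ (probability $\ge 1/(4\mu)$, using only that \emph{one} copy of $x$ exists) and \emph{bad} if it creates the other optimum (probability $\le 1/n$), and observes that $2(\mu-1)$ good trials before the first bad one force full takeover. The success probability of this race is $\bigl(\frac{1/(4\mu)}{1/(4\mu)+1/n}\bigr)^{2(\mu-1)}\ge 1-8\mu(\mu-1)/n$, and this is precisely where the hypothesis $\mu=o(\sqrt n)$ enters. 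Your route instead exploits that the cloning probability scales with $c_t/\mu$, giving geometric growth and a takeover window of only $O(\log\mu)$ generations; the union bound $O(\mu\log n)/n$ is then $o(1)$ already for $\mu=o(n/\log n)$, so your argument would in fact prove the theorem under a weaker hypothesis than stated. The price is the extra machinery you correctly flag as the hard part, whereas the paper's race is entirely elementary. Two small remarks: your ``with probability $1-o(1)$'' for the collapse is not literally true when $\mu=O(1)$ (the tail bound degrades to a constant), but $\Omega(1)$ is all you need and is what you ultimately claim; and you can drop the NSGA-III symmetry worry altogether---as you yourself note, once an optimum is present the critical-layer tie-breaking never touches it, so simply condition on ``some $x\in F$ is created first, alone'' (as the paper does) rather than on which one, and the reference-point set never enters the argument.
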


\begin{proof}
\newedit{We only need to show the result for NSGA-II because both algorithms follow the same
algorithmic scheme and only differ in the selection of the individuals in the critical
layer. The proof below holds regardless of how this selection is done.}
By a union bound, the initial population does not contain both the two $0^n$ and $1^n$
Pareto-optimal solutions with probability $1-2 \cdot \mu \cdot 2^{-n}=1-o(1)$. Assume that
event occurs, still $0^n$ and $1^n$ can be simultaneously created in one generation of 
NSGA-II. 

\newedit{We now estimate a lower bound on the probability of not creating both $0^n$ and 
$1^n$ of $F$ in one generation. For this, we consider the most ideal setting for the opposite 
situation of creating both optima at the same time. Specifically, we assume that when 
a solution is selected it always has a probability $2/n$ of producing a solution in $F$ 
as the offspring. This is because at least one bit must be flipped to create these solutions 
from a non Pareto-optimal one, \ie this occurs with at most probability $1/n$ by bitwise 
mutation, and the factor $2$ comes from the union bound on the cardinality of $F$. Furthermore, 
we optimistically assume that when more than one solution in $F$ is created, then the Pareto 
front is fully covered. Let $X$ be the number of offspring solutions of $F$ in $Q_t$, then 
under the above assumptions $X\sim\Bin(\mu,2/n)$, 
the probability of discovering the Pareto-front is $p_1:=\prob{X\geq 1} = 1 - \prob{X=0}$ 
while that of fully covering the front is $p_2:=\prob{X \geq 2} = 1 - \prob{X=0} - \prob{X=1}$. 
It follows from Lemma~10 in~\cite{Badkobeh2015} and $\mu=o(\sqrt{n})$ that
\begin{align*}
p_1 &= 1 - \left(1 - \frac{2}{n}\right)^{\mu} 
    \geq \frac{2\mu/n}{2\mu/n+1} 
    = \frac{2\mu/n}{o(1)+1}
    = \Omega\left(\frac{\mu}{n}\right).
\end{align*}}

\newedit{By Bernoulli's inequality we have 
$\left(1-\frac{2}{n}\right)^{\mu-1} \geq 1 - \frac{2(\mu-1)}{n}$, thus
\begin{align*}
p_2 &=    1 - \left(1 - \frac{2}{n}\right)^{\mu} - {\mu \choose 1}\left(\frac{2}{n}\right)\left(1 - \frac{2}{n}\right)^{\mu-1} \\
    &=    1 - \left(1 - \frac{2}{n} + \frac{2\mu}{n}\right)\left(1 - \frac{2}{n}\right)^{\mu-1} \\
    &\leq 1 - \left(1 + \frac{2(\mu-1)}{n}\right)\left(1 - \frac{2(\mu-1)}{n}\right)
    = \frac{4(\mu-1)^2}{n^2} = \bigO\left(\frac{\mu^2}{n^2}\right).
\end{align*}
Therefore, the probability of fully covering the front conditioned on it has been discovered
is $p_2/p_1 = O(\mu/n) = o(1)$. Thus when the front is discovered in a generation, the probability
that is not fully covered by the offspring is at least $1-o(1)$.}

Conditioned on the event that only a solution $x\in F$ is created first at generation $t$, 
we now estimate the probability for $x$ to take over the whole population before the creation 
of the other solution $y \in F\setminus \{x\}$ in future generations. 
\newedit{The argument is similar to Theorem~2~in~\cite{Friedrich2009} but here adapted to 
the generation of $\mu$ offspring solutions instead of only $1$ in that paper.} 
An offspring production is 
called \emph{good}, denoted as event $G$, if a copy of $x$ is produced, and this happens 
with probability of 
$\prob{G}\geq \frac{1}{\mu}\cdot \left(1-\frac{1}{n}\right)^n\geq \frac{1}{4\mu}$
for $n\geq 2$. 
A production is called \emph{bad}, denoted as $B$, if $y$ is produced and this occurs with 
probability of 
$\prob{B}\leq \frac{1}{n}$ since at least one bit is required to be flipped from any 
selected parent. 
A sufficient condition for $x$ to entirely take over the population is that $2(\mu-1)$
good productions occur before the first bad one. The factor $2$ here takes into account
that the $(\mu-1)$-th good productions may occur at the beginning of some generation $t'>t$ 
thus the rest of the productions in $t'$ still requires to be not bad.
The probability of 
the event is then at least:
\begin{align*}
\ifarxiv
\left(\frac{\prob{G}}{\prob{G}+\prob{B}}\right)^{2(\mu-1)}&
\else
&\left(\frac{\prob{G}}{\prob{G}+\prob{B}}\right)^{2(\mu-1)}\fi 
   \geq \left(\frac{1/(4\mu)}{1/(4\mu)+1/n}\right)^{2(\mu-1)}\\
    &= \left(1 - \frac{1/n}{1/(4\mu)+1/n}\right)^{2(\mu-1)}
    \geq \left(1 - \frac{4\mu}{n}\right)^{2(\mu-1)}\\
    &\geq 1 - \frac{8\mu(\mu-1)}{n} = 1-o(1)\neweditx{,}
\end{align*}
\newedit{where the last inequality follows from Bernoulli's inequality.}

After this series of events, that happen with probability $1-\newedit{o(1)}$, the only way to 
create $y$ is to flip all the bits of a copy of $x$ and this occurs with probability $1/n^n$. 
By the law of total probability 
\newedit{the expected number of fitness evaluations to optimise the function}
is at least \newedit{$(1-o(1))\cdot n^n = \Omega(n^{n})$}.
\end{proof}

The above analysis is similar to that of \TWOMAX in \cite{Friedrich2009}, however we have
the ($\mu$+$\mu$)-selection scheme here instead of the steady-state scheme, and this gives 
some advantages to the algorithm. On the other hand, the situation is not exactly the same
since we are in a multiple-objective setting, thus non-dominated solutions can be created 
and can enter the population. 


\section{Analysis of NSGA-III}

NSGA-III~\cite{Deb2014} uses the same elitist $(\mu+\mu)$ framework as NSGA-II 
(\ie Algorithm~\ref{alg:nsga-ii}). However, the algorithm uses distances between normalised 
fitness vectors and a set of predefined reference rays and employs 
a \emph{niching} procedure to choose the remaining $r=\mu-\sum\nolimits_{i=1}^{i^*-1} |F_{t+1}^{i}|$ 
solutions of $F_{t+1}^{i^*}$ to complete $P_{t+1}$. 

Fitness vectors of already selected solutions $Y_t$ (set in line~\ref{alg:nsga-ii:Y_t} 
in~Algorithm~\ref{alg:nsga-ii}) and those in the critical layer $F_{t+1}^{i^*}$ are normalised. 
We consider the improved normalisation procedure from~\cite{Blank2019} 
which treats the boundary cases more effectively. The detailed procedure can be found in 
that paper, but the following description and properties are sufficient for our purpose. 
For a $d$-objective function $f\colon \{0,1\}^n\rightarrow \Z^{d}$, the normalised fitness 
vector $f^{n}(x):=(f_1^n(x),\dots,f_d^n(x))$ of search point $x$ is computed with:
\begin{align}
f_j^n(x)
    =\frac{f_j(x)-z_j^{\ideal}}{z_j^{\nadir}-z_j^{\ideal}}\label{eq:nsgaiii-normalize}
\end{align}
for each $j\in[d]$. The points $z^{\ideal}:=(z^{\ideal}_1, \dots, z^{\ideal}_d)$ and 
$z^{\nadir}:=(z^{\nadir}_1, \dots, z^{\nadir}_d)$ are referred to as the \emph{ideal} and 
\emph{nadir} points, respectively, of the objective space. Particularly, $z_j^{\ideal}$
is set to the minimum value in objective $j$ either from the current population, 
or from all search points that the algorithm has explored so far. In this paper, to simplify
the analysis we consider the former way of setting $z_j^{\ideal}$ based on the current population. 
The computation of the nadir point requires treating some boundary cases with care, however 
the procedure in \cite{Blank2019} guarantees for each $j\in[d]$ that
\begin{align}
(z_j^{\nadir} \geq \varepsilon_{\nadir}) 
\wedge (z_j^{\nadir} > z_j^{\ideal}) 
\wedge (z_j^{\nadir} \leq z_j^{\max})\label{eq:nsgaiii-nadir-prop}
\end{align}
where $z_j^{\max}$ is defined similar to $z_j^{\ideal}$ but for the maximum objective 
value in objective $j$ 
and $\varepsilon_{\nadir}$ is a small positive threshold. 

\begin{algorithm2e}[h]
Compute the normalized $f^n(x)$ 
for each $x\in Y_t \cup F_{t+1}^{i^*}$~\cite{Blank2019}\;
Associate each $x\in Y_t \cup F_{t+1}^{i^*}$ to its reference point $\mathrm{rp}(x)\in\refer$ 
based on the smallest distance to the reference rays\;
For each $r \in \refer$, initialise $\rho_r:=|\{x\in Y_t \mid \mathrm{rp}(x)=r\}|$\;
Initialise $\tilde{F}_{t+1}^{i^*}:=\emptyset$ and $R':=\refer$\;
\While{true}{
Determine $r_{\min} \in R'$ such that $\rho_{r_{\min}}$ is minimal, and ties are broken randomly\;
Determine $x_{r_{\min}} \in F_{t+1}^{i^*} \setminus \tilde{F}_{t+1}^{i^*}$ which is associated with $r_{\min}$ 
  and minimises the distance between $f^n(x_{r_{\min}})$ and the ray of $r_{\min}$, and ties are broken randomly\;
\If{$x_{r_{\min}}$ exists}{
    Update $\tilde{F}_{t+1}^{i^*} := \tilde{F}_{t+1}^{i^*} \cup \{x_{r_{\min}}\}$\;
    Update $\rho_{r_{\min}} := \rho_{r_{\min}} + 1$\;
    \If{$\lvert{Y_t}\rvert + \lvert{\tilde{F}_{t+1}^{i^*}}\rvert = \mu$}
        {\Return{$\tilde{F}_{t+1}^{i^*}$}
        }
    }
    \Else{
        Update $R':=R' \setminus \{r\}$\;
    }
}
\caption{\textsc{Niching} procedure~\cite{Deb2014} 
    to compute $\tilde{F}_{t+1}^{i^*}$ from $F_{t+1}^{i^*}$ based on 
    a set $\refer$ of reference points.}
\label{alg:ref-points}
\end{algorithm2e}

After normalization, each of the solutions of $Y_t \cup F_{t+1}^{i^*}$ is associated with
a reference point from a predefined set $\refer$. This association is based on the corresponding 
nearest reference ray to the normalised fitness vector of the solution. Here, the reference ray 
of a reference point is the ray that is originated from the origin and goes through the reference 
point. The niching procedure \cite{Deb2014} is then applied to identify the solutions 
$\tilde{F}_{t+1}^{i^*}$ from $F_{t+1}^{i^*}$ to complete $P_{t+1}$ along with $Y_t$. This is 
summarised in Algorithm~\ref{alg:ref-points}. 
This simplified description\footnote{The original paper~\cite{Deb2014} also makes a difference
between the cases $\rho_{r_{\min}}=0$ and $\rho_{r_{\min}}\geq 1$ but has a footnote 
explaining the variant~\cite{WiethegerD23} as described here.} 
of the procedure is due to \cite{WiethegerD23}. 
First, the niche count $\rho_r$ for each reference point $r\in\refer$ is initialised as the 
number of solutions in $Y_t$ already associated to $r$. The procedure then iterates through
the reference points, always selecting the one with the smallest niche count so far (ties are 
broken randomly), to fill $\tilde{F}_{t+1}^{i^*}$ with solutions from $F_{t+1}^{i^*}$ associated 
with the selected reference point and update the niche count. Priority is given to those with 
the smallest distance to the reference ray (ties are broken randomly).

There are different ways to define the reference points. The original paper~\cite{Deb2014} 
suggests the method from Das~and~Denis~\cite{Das1998} to define this set with
a parameter $p\in\N$:
\begin{align}
\refer := \left\{\left(\frac{a_1}{p},\dots,\frac{a_d}{p}\right)
                 \mid (a_1,\dots,a_d)\in \N^{d}_0, \sum\nolimits_{i=1}^{d} a_i=p\right\}\neweditx{.}\label{eq:nsgaiii-refpts-das-dennis}
\end{align} 
However, our result for \OTZTfull below only makes a minimal requirement for $\refer$ that
it should \neweditx{contain} the unit vectors.
%

\begin{lemma}\label{lem:nsgaiii-progress}
NSGA-III
    with $\refer\supseteq \{(0,1),(1,0)\}$, 
    $\mu \geq 2|\refer|$, 
    $z_j^{\min}$ computed from the current population, 
    and avoiding genotype duplication 
is \ZOmonotone on \OTZTfull if $P_0\cap \{0^n,1^n\}=\emptyset$. 
\end{lemma}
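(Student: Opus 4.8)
The plan is to follow the same case analysis as in the proof of Lemma~\ref{lem:nsgaii-progress}, but to replace the crowding-distance argument by an analysis of the normalisation of~\cite{Blank2019} and the niching procedure of Algorithm~\ref{alg:ref-points}. Fix a generation $t$, write $\beta := \max\{\ones{z}\mid z\in R_t\}$ and pick $x^{+}\in R_t$ with $\ones{x^{+}}=\beta$; symmetrically pick $x^{-}\in R_t$ with $\zeroes{x^{-}}$ maximal. Since $P_t\subseteq R_t$ it suffices to show that some point of $P_{t+1}$ has $\beta$ ones and some point of $P_{t+1}$ has $\zeroes{x^{-}}$ zeroes. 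Because $\lvert P_t\rvert=\mu$ we always have $\lvert R_t\rvert\ge\mu$; if $\lvert R_t\rvert=\mu$ then $P_{t+1}=R_t\supseteq P_t$ and we are done, so assume $\lvert R_t\rvert>\mu$. If both $0^n,1^n\in R_t$, then by Lemma~\ref{lem:otzt-dominance} they are exactly the first layer $F^1_{t+1}$, which (as $\mu\ge 2\lvert\refer\rvert\ge 2$) is contained in $Y_t\subseteq P_{t+1}$; since $\ones{1^n}=\zeroes{0^n}=n$ this settles the claim.

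It remains to handle the case where $R_t$ contains at most one Pareto-optimal point. Suppose first $R_t\cap\{0^n,1^n\}=\emptyset$. By Lemma~\ref{lem:otzt-dominance} no point of $R_t$ dominates another, so $i^*=1$, the critical layer equals $R_t$, $Y_t=\emptyset$, and every niche count starts at $\rho_r=0$. Every non-optimal point satisfies $f_2(x)=n-f_1(x)$, so with $\alpha:=\min\{\ones{z}\mid z\in R_t\}$ the ideal point is $z^{\ideal}=(\alpha,\,n-\beta)$, and by~\eqref{eq:nsgaiii-nadir-prop} the denominators $z_j^{\nadir}-z_j^{\ideal}$ in~\eqref{eq:nsgaiii-normalize} are strictly positive. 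Hence (assuming the non-degenerate case $\alpha<\beta$) $f^n(x^{+})=\bigl(\tfrac{\beta-\alpha}{z_1^{\nadir}-\alpha},\,0\bigr)$ lies on the reference ray through $(1,0)$, while every other point of $R_t$ has a strictly positive second normalised coordinate and hence strictly positive distance to that ray. So $x^{+}$ (or any point with $\beta$ ones) is associated with $(1,0)\in\refer$ and minimises the distance to its ray; symmetrically $x^{-}$ with $(0,1)$. A routine argument then shows that both $(1,0)$ and $(0,1)$ are selected within the first $O(\lvert\refer\rvert)$ iterations of the niching loop: while a reference point with an available associated solution still has $\rho_r=0$ it remains a candidate, and it becomes the strict minimiser once all other reference points have been raised to $\rho\ge 1$ or removed, which takes at most $\lvert\refer\rvert-1$ selections; by $\mu\ge 2\lvert\refer\rvert$ the critical layer still has unfilled slots at that point. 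When $(1,0)$ is selected, the closest available associated solution, namely $x^{+}$, is added to $\tilde F^{i^*}_{t+1}$; likewise $x^{-}$ via $(0,1)$. (In the degenerate case $\alpha=\beta$ all points of $R_t$ carry the same fitness, so each of the $\mu$ survivors has $\beta$ ones and $n-\alpha$ zeroes and the claim is trivial.)

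Finally suppose exactly one optimum, say $0^n$, lies in $R_t$ (the case $1^n\in R_t$ being symmetric). As in Lemma~\ref{lem:nsgaii-progress}, the avoidance of genotype duplication together with $P_0\cap\{0^n,1^n\}=\emptyset$ forces $0^n$ to occur in a single copy in $R_t$, and by Lemma~\ref{lem:otzt-dominance} we get $F^1_{t+1}=\{0^n\}$ and critical layer $F^{i^*}_{t+1}=R_t\setminus\{0^n\}$, with $Y_t=\{0^n\}\subseteq P_{t+1}$. Since $\zeroes{0^n}=n$, the second condition of Definition~\ref{def:0/1-monotone} already holds. For the first condition, observe that $f_2(0^n)=n$ does not lower the second ideal coordinate, so again $z^{\ideal}_2=n-\beta$ (using $\ones{0^n}=0$, so $\beta$ is also the maximum over $F^{i^*}_{t+1}$); hence $x^{+}\in F^{i^*}_{t+1}$ normalises to $f^n(x^{+})=(\,\cdot\,,0)$ and is associated with $(1,0)$ while minimising the distance to its ray. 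Now the niche counts are initialised from $Y_t=\{0^n\}$, so $\sum_r\rho_r=1$ and $0^n$ contributes to exactly one of $\rho_{(1,0)},\rho_{(0,1)}$; the same counting argument shows $(1,0)$ is selected within the first $O(\lvert\refer\rvert)$ iterations, well before the $\mu-1\ge 2\lvert\refer\rvert-1$ slots are exhausted, and the solution picked for it is $x^{+}$. I expect the main obstacle to be exactly this niching analysis — arguing rigorously that the reference points $(1,0)$ and $(0,1)$ are reached before the procedure terminates, that it is the extreme solution (and not some other solution sharing that reference point) that gets picked, and that initialising the niche counts from $Y_t$ cannot derail this — together with checking that the normalisation never degenerates, which is what property~\eqref{eq:nsgaiii-nadir-prop} guarantees.
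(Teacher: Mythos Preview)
Your proposal is correct and follows essentially the same route as the paper: the same case split on $\lvert R_t\cap\{0^n,1^n\}\rvert$, the same observation that after normalisation the extreme points of the critical layer land on the coordinate axes and are therefore associated (at distance~$0$) with the reference points $(1,0)$ and $(0,1)$, and the same counting argument that the niching loop reaches these reference points before $\mu$ slots are used.

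Two small remarks. First, your claim that ``$0^n$ contributes to exactly one of $\rho_{(1,0)},\rho_{(0,1)}$'' is not true in general: $f^n(0^n)$ has both coordinates strictly positive, so $0^n$ may be associated with \emph{any} point of $\refer$. This is harmless---if $0^n$ lands elsewhere the argument only gets easier---but you should phrase it as the paper does, treating the case $\mathrm{rp}(0^n)=(1,0)$ as the worst case. Second, your ``$O(\lvert\refer\rvert)$ iterations'' should be sharpened to the explicit bound $2\lvert\refer\rvert-1$ to match the hypothesis $\mu\ge 2\lvert\refer\rvert$; the paper does this by noting that after $\lvert\refer\rvert-1$ solution-adding steps every niche count is at least~$1$, and after $\lvert\refer\rvert-1$ further steps the count of $(1,0)$ is strictly minimal if it has not yet been served.
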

\begin{proof}
Like in the proof of Lemma~\ref{lem:nsgaii-progress}, let $i:=\max\{\ones{z} \mid z\in F^{i^*}_{t+1}\}$ and 
$j:=\max\{\zeroes{z}\mid z\in F^{i^*}_{t+1}\}$ for a generation $t\geq 0$, we consider 
the same case separation. 

If $R_t \cap F=\emptyset$: No Pareto-optimal solutions have been created yet.
By Lemma~\ref{lem:otzt-dominance} all the solutions in $R_t$ are then incomparable 
and belong to layer $F^1_{t+1}$, thus $R_t=F^1_{t+1}=F^{i^*}_{t+1}$. The selection 
of these solutions to survive in $P_{t+1}$ then 
use Algorithm~\ref{alg:ref-points}. By the definition of $i$ and $j$, we note that
    $z^{\ideal}_1 = \min\{\ones{z} \mid z\in R_{t}\}=n-j$ 
    and $z^{\ideal}_2 = \min\{\zeroes{z} \mid z\in R_{t}\}=n-i$. The normalization
according to Equation~\eqref{eq:nsgaiii-normalize} on all solutions $x$ with $f(x)=(i,n-i)$
and $y$ with $f(y)=(n-j,j)$ gives $f^n(x)=(a,0)$ and $f^n\neweditx{(y)}=(0,b)$ 
where
\begin{align*}
a &= \frac{i - (n-j)}{z^{\nadir}_1 - (n-j)}
   = \frac{i+j - n}{z^{\nadir}_1 - (n-j)} \geq 0\\
b &= \frac{j - (n-i)}{z^{\nadir}_2 - (n-i)}
   = \frac{i+j - n}{z^{\nadir}_2 - (n-i)} \geq 0.
\end{align*}
The non-negativity is due to \eqref{eq:nsgaiii-nadir-prop}. 
Note that $a$ and $b$ can only be zero at the same time since they have the 
same numerator $i+j-n$. In this case, all solutions in $F^{i^*}_{t+1}$ have the same fitness 
vector and $\mu$ of them chosen randomly survive to $P_{t+1}$. It is then obvious 
that the maximum numbers of $1$s and $0$s remain unchanged between $P_t$ and $P_{t+1}$.
Otherwise if $a>0$ and $b>0$, then the distances between $f^n(x)=(a,0)$ and 
reference ray of $(1,0)$ and between $f^n(x)=(0,b)$ and reference ray of $(0,1)$
are both $0$, which is the smallest possible distance. Thus all points $x$ are 
associated to $(1,0)$ and all points $y$ are associated to $(0,1)$. All other points 
$z$ of $R_t$ with $f(z)\neq f(x)$ and $f(z)\neq f(y)$ have that 
    $f_1^n(z)=\frac{\ones{z}-(n-j)}{z^{\nadir}_1 - (n-j)}>0$ (because $f(z)\neq f(y)$) 
    and $f_2^n(z)=\frac{\zeroes{z}-(n-i)}{z^{\nadir}_2 - (n-i)}>0$ (because $f(z)\neq f(x)$).
Thus they all have larger distances to the reference rays of $(1,0)$ 
and $(0,1)$ compared to $x$ and $y$. 
Because $\mu\geq 2|\refer|>|\refer|$ and Algorithm~\ref{alg:ref-points}
always gives priority to reference point with the smallest 
 \neweditx{niche count 
so far, every} reference point can keep at least one solution associated 
to it with preference to those with the smallest distance. Therefore, at least one $x$ 
solution and one $y$ solution survives to $P_{t+1}$ with 
    $\ones{x}=i \geq \max\{\ones{z} \mid z\in P_{t}\}$ and 
    $\zeroes{y}=j \geq \max\{\zeroes{z}\mid z\in P_{t}\}$
since $P_t\subseteq R_t$.

If $R_t \cap F\neq\emptyset$: Pareto-optimal solutions have been created. 
Like in Lemma~\ref{lem:nsgaii-progress} the case that both $0^n$ and $1^n$ are contained in~$R_t$ is trivial 
as we have a population with the maximum possible numbers of $1$s and $0$s already and both search points will be kept.
If only one of them is in $R_t$, we then argue similarly to the previous lemma
that it can have only one copy based on the initial condition $P_0 \cap \{0^n,1^n\}=\emptyset$.   
By Lemma~\ref{lem:otzt-dominance} that one copy of Pareto-optimal solution,
denoted by $z$, is in the first non-dominated layer $F^1_{t+1}$ which is also $Y_t$, 
while the rest of $R_t$ are in the next non-dominated layer $F^2_{t+1}=F^{i^*}_{t+1}$. 
Repeating the same argument as in the previous case but for the layer $F^2_{t+1}$
gives that solutions $x$ with $f(x)=(i,n-i)$ and $y$ with $f(y)=(n-j,j)$ are
associated to reference points $(1,0)$ and $(0,1)$ respectively. 
We now consider the worst case of the niching procedure given in
Algorithm~\ref{alg:ref-points} that the Pareto-optimal solution
$z$ is associated to one of these reference points, say $(1,0)$ 
(the argument is symmetric in the other case), \newedit{and that all 
reference points other than $(1,0)$ and $(0,1)$ also have associated 
solutions which can compete with solutions $x$ and $y$}. 
This initialises $\rho_{(1,0)}$ to $1$ and the main loop of the niching 
procedure can be only run for at most $\mu-|Y_t|=\mu-1$ iterations. After the 
first $|\refer|-1$ iterations, all niche counts are set to $1$ 
and particularly a solution $y$ is included in $\tilde{F}^2_{t+1}$ hence will survive to $P_{t+1}$. 
Afterwards, the order for which the reference points are selected is random.
Nevertheless, if after $|\refer|-1$ further iterations, if $(1,0)$ is
not selected yet then still $\rho_{(1,0)}=1$ while all $\rho_{r}=2$ for all
$r\in \refer\setminus\{(0,1)\}$, then in the next iteration a solution
$x$ is included in $\tilde{F}^2_{t+1}$. So far we have used $2(|\refer|-1)+1$
iterations in our argument, and this is still smaller than the limit of $\mu-1$ 
iterations given $\mu\geq 2|\refer|$. 
We have shown that three solutions $x,y,z$ survives to $P_{t+1}$ and particularly
they carry the maximum numbers of $1$s and $0$s in $R_t$ and these number
are no less than those of $P_t$ because $P_t\subseteq R_t$.
\end{proof}


As soon as the conditions of the lemma are satisfied, NSGA-III cannot lose track
of the progress on the numbers of $1$s and $0$s. Particularly, it then behaves
like NSGA-II in Theorem~\ref{thm:nsgaii}. We then have the following result whose 
 proof follows exactly the same steps at that of Theorem~\ref{thm:nsgaii}, thus
this proof is omitted.

\begin{theorem}\label{thm:nsgaiii}
NSGA-III as described in Lemma~\ref{lem:nsgaiii-progress} 
using the standard bitwise mutation or the local mutation 
optimises \OTZTfull in $\bigO(\mu n \log{n})$
expected fitness evaluations. 
\end{theorem}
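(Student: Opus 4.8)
The plan is to mirror the proof of Theorem~\ref{thm:nsgaii} almost verbatim, since Lemma~\ref{lem:nsgaiii-progress} plays for NSGA-III exactly the role that Lemma~\ref{lem:nsgaii-progress} played for NSGA-II. Concretely, the proof has two independent ingredients: a \emph{no-loss} guarantee and a \emph{progress} estimate. The no-loss guarantee is already in hand: by Lemma~\ref{lem:nsgaiii-progress}, under the stated conditions ($\refer \supseteq \{(0,1),(1,0)\}$, $\mu \ge 2|\refer|$, $z_j^{\ideal}$ from the current population, avoidance of genotype duplication, and $P_0 \cap \{0^n,1^n\} = \emptyset$), NSGA-III is \ZOmonotone on \OTZT, so the maximum number of $1$s and the maximum number of $0$s present in the population never decrease. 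Hence it suffices to bound the expected number of generations until both $0^n$ and $1^n$ have been created.

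For the progress estimate I would argue identically to Theorem~\ref{thm:nsgaii}. Let $i := \max\{\ones{z} \mid z \in P_t\}$. In a single offspring production, selecting (probability $1/\mu$) an individual $x$ with $\ones{x} = i$ and flipping exactly one of its $n-i$ zero-bits while leaving the rest unchanged produces an individual with more $1$s; this has probability at least $\frac{n-i}{e\mu n} =: s_i$ under standard bitwise mutation and at least $\frac{n-i}{\mu n} > s_i$ under local mutation. Over the $\mu$ independent offspring productions in one generation the success probability is amplified to $1 - (1-s_i)^\mu \ge \frac{\mu s_i}{\mu s_i + 1}$ by Lemma~10 in~\cite{Badkobeh2015}. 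Summing the expected waiting times over $i = i_0, \dots, n-1$, where $i_0 = \max\{\ones{z} \mid z \in P_0\}$, gives
\begin{align*}
\sum\nolimits_{i=i_0}^{n-1} \left(1 + \frac{1}{\mu s_i}\right)
  \le \sum\nolimits_{i=0}^{n-1} \left(1 + \frac{en}{n-i}\right)
  = \bigO(n\log n)
\end{align*}
expected generations to create $1^n$, and symmetrically $\bigO(n \log n)$ expected generations to create $0^n$, using that the monotonicity of Lemma~\ref{lem:nsgaiii-progress} means these argmax values cannot regress during the process. The overall expected number of generations is therefore $\bigO(n \log n)$, and multiplying by the $\mu$ fitness evaluations per generation yields $\bigO(\mu n \log n)$ expected evaluations.

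The only genuine subtlety — and the reason the excerpt says the proof is \emph{omitted} rather than trivial — is checking that the progress argument truly decouples from the niching details of NSGA-III. Once Lemma~\ref{lem:nsgaiii-progress} is established, the creation of a better individual in the offspring population $Q_t$ is an event about mutation alone and is oblivious to how survival selection subsequently breaks ties; and the no-loss property guarantees that a newly created record-holder, once in $R_t$, is retained in $P_{t+1}$. Thus the main obstacle has already been dispatched inside Lemma~\ref{lem:nsgaiii-progress} (the delicate bookkeeping of niche counts and the $2(|\refer|-1)+1 \le \mu - 1$ iteration budget), and nothing new needs to be verified here. I would simply note that the argument is step-for-step identical to that of Theorem~\ref{thm:nsgaii} with Lemma~\ref{lem:nsgaii-progress} replaced by Lemma~\ref{lem:nsgaiii-progress}, and omit the repetition.
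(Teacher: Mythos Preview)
Your proposal is correct and matches the paper's approach exactly: the paper explicitly states that the proof ``follows exactly the same steps as that of Theorem~\ref{thm:nsgaii}'' and omits it, so reproducing the progress argument verbatim with Lemma~\ref{lem:nsgaiii-progress} in place of Lemma~\ref{lem:nsgaii-progress} is precisely what is intended.
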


We give a corollary to the specific choice of $\refer$ as in Equation \eqref{eq:nsgaiii-refpts-das-dennis}.
This result holds because for any $p\geq 1$, $(p,0)$, $(0,p)$ are always valid 
choices for $a_1, a_2$ in a bi-objective problem, and these result in the unit 
vectors after the normalisation with $p$. 

\begin{corollary}\label{cor:nsgaiii-das-dennis}
NSGA-III with $\refer$ as defined in Equation~\eqref{eq:nsgaiii-refpts-das-dennis}
for $p\geq 1$ and the rest of the parameters as described in Lemma~\ref{lem:nsgaiii-progress} 
and using the standard bitwise mutation or the local mutation 
in $\bigO(\mu n \log{n})$ expected fitness evaluations. 
\end{corollary}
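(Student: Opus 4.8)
The plan is to obtain Corollary~\ref{cor:nsgaiii-das-dennis} as an immediate specialisation of Theorem~\ref{thm:nsgaiii}. That theorem (via Lemma~\ref{lem:nsgaiii-progress}) only imposes one structural hypothesis on the reference set, namely $\refer\supseteq\{(0,1),(1,0)\}$; every other ingredient — $\mu\geq 2|\refer|$, the ideal point being read off the current population, avoidance of genotype duplication, the initial condition $P_0\cap\{0^n,1^n\}=\emptyset$, and the choice of standard bitwise or local mutation — is carried over verbatim in the phrase ``the rest of the parameters as described in Lemma~\ref{lem:nsgaiii-progress}''. So the entire proof reduces to verifying the unit-vector containment for the Das--Dennis set.

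First I would instantiate Equation~\eqref{eq:nsgaiii-refpts-das-dennis} with $d=2$ and a fixed parameter $p\geq 1$. The tuple $(a_1,a_2)=(p,0)\in\N^2_0$ satisfies $a_1+a_2=p$ and yields the reference point $(a_1/p,a_2/p)=(1,0)$; by symmetry, $(a_1,a_2)=(0,p)$ yields $(0,1)$. Hence $\{(1,0),(0,1)\}\subseteq\refer$, which is exactly the requirement of Lemma~\ref{lem:nsgaiii-progress}. (For $d=2$ one moreover has $|\refer|=p+1$, so the condition $\mu\geq 2|\refer|$ just asks for $\mu\geq 2(p+1)$, and the running-time bound $\bigO(\mu n\log n)$ inherits this scaling; nothing further needs to be said.) With the hypothesis confirmed, Theorem~\ref{thm:nsgaiii} applies directly and gives the claimed $\bigO(\mu n\log n)$ expected number of fitness evaluations to optimise \OTZTfull.

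There is no genuine obstacle in this argument: the only point worth flagging is why the restriction $p\geq 1$ is necessary. For $p=0$ the only admissible tuple is $(a_1,a_2)=(0,0)$, so $\refer=\{(0,0)\}$ contains neither unit vector (and, being the origin, does not even induce a well-defined reference ray), which is precisely the degenerate case the corollary's assumption $p\geq 1$ rules out. Thus the corollary follows.
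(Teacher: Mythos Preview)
Your proposal is correct and mirrors the paper's own argument: the paper also reduces the corollary to Theorem~\ref{thm:nsgaiii} by observing that for any $p\geq 1$ the tuples $(p,0)$ and $(0,p)$ are admissible in Equation~\eqref{eq:nsgaiii-refpts-das-dennis} and yield the unit vectors after division by $p$. Your additional remarks on $|\refer|=p+1$ and the necessity of $p\geq 1$ are nice clarifications but go slightly beyond what the paper states.
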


NSGA-III can optimise \OTZT in $O(n\log{n})$ using
a constant number of reference points (only 2 reference points are sufficient) and
with a constant population size. The success on NSGA-III on this function is mainly 
due to a proper set of reference points, and the normalisation procedure with the 
use of the ideal point. This normalisation enables the preference of the extreme
solutions of the dominated layer as this is required to optimise \OTZT. 
%
%
When copies of solutions are allowed, NSGA-III also becomes inefficient for $o(\sqrt{n})$ 
population size as shown in Theorem~\ref{thm:nsgaii-no-diversity}. 

\section{Analysis of SMS-EMOA}\label{sec:sms-emoa}

SMS-EMOA~\cite{Beume2007} is another popular EMO algorithm that uses the non-dominated 
sorting \cite{Deb2002} to rank the solutions. It is described in Algorithm~\ref{alg:sms-emoa}. 
Unlike NSGA-II and NSGA-III, it uses the \emph{steady-state} scheme of ($\mu$+$1$)~EA. 
Furthermore, the algorithm uses the contribution to the hypervolume~\cite{Zitzler1998,ZitzlerT1999} 
of the critical layer, which is also the lowest layer in the ($\mu$+$1$) scheme, to choose 
the worst individual to eject during the survival selection (lines~\ref{alg:sms-emoa-hypervolume} 
and \ref{alg:sms-emoa-eject}). When avoiding genoptype duplication is used, the algorithm 
simply reject the offspring 
if a copy of it already exists in 
the 
population 
(line~\ref{alg:sms-emoa-reject-duplicate}).

The computation of the hypervolume is done with respect to a reference point $h$ 
that the algorithm has to set at the beginning (in line~\ref{alg:sms-emoa-refpts}). 
The hypervolume of a set $P$ of solutions
with respect to a fitness function $f\colon\mathcal{X}\rightarrow \R^{d}$ and the point
$h$ is defined as:
\begin{align*}
\hv(P) := \mathcal{L}\left(\bigcup_{x\in P} \left\{y\in\R^{d}\mid h\preceq y \wedge y \preceq f(x)\right\}\right)
\end{align*}
where $\mathcal{L}$ denotes the Lebesgue measure. 
\neweditx{ 
Let $\deltahv(x,P)$ be the contribution of a point $x\in P$ to $\hv(P)$, then:}
\begin{align*}
\deltahv(x,P) := \hv(P) - \hv(P\setminus \{x\}).
\end{align*}

\begin{algorithm2e}[h]
Initialise $P_0 \sim \Unif( (\{0,1\}^n)^{\mu})$\;
Choose a reference point $h\in\R^{d}$\label{alg:sms-emoa-refpts}\;
\For{$t:= 0$ to $\infty$}{
    Sample $p$ from $P_t$ uniformly at random\;
	Create $x$ by mutating a copy of $p$\;
	Set $R_t := P_t \cup \{x\}$. In case of avoiding genotype duplication: 
    if $x \in P_t$ set $P_{t+1}:=P_t$ and go directly to the next iteration\label{alg:sms-emoa-reject-duplicate}\;
	Partition $R_t$ into layers $F^1_{t+1},F^2_{t+1},\ldots,F^{i^*}_{t+1}$ 
    using the non-dominated sorting algorithm~\cite{Deb2002}\;
    Compute the \neweditx{hypervolume contribution} $\deltahv(y, F^{i^*}_{t+1})$ using $h$ as the reference point 
      for each $y\in F^{i^*}_{t+1}$ 
      and find the set $Y$ of points with the smallest contribution\label{alg:sms-emoa-hypervolume}\;
    Sample $y\sim\Unif(Y)$ and create the next population $P_{t+1} := R_t \setminus\{y\}$\label{alg:sms-emoa-eject}\;
}
\caption{
SMS-EMOA~\cite{Beume2007} on $\{0,1\}^n$,
with or without avoiding genotype duplication.}
\label{alg:sms-emoa}
\end{algorithm2e}

\begin{lemma}\label{lem:sms-emoa-deltahv}
Consider SMS-EMOA 
    using $h \preceq (-(n/2)^2, -(n/2)^2)$ 
      on \OTZTfull with $n\geq 4$. 
If a layer $F:=F^i_{t+1}$ from the immediate population $R_t=P_t\cup\{x'\}$ of SMS-EMOA has three 
non-Pareto-optimal points $x_a, x_b, x_c$ with unique fitness in $F$ such that: 
\begin{itemize}
\item $\OTZT(x_a)=(a,n-a)$, $a=\max\{\ones{x}\mid x\in F^i_t\}$
\item $\OTZT(x_b)=(b,n-b)$, $n-b=\max\{\zeroes{x}\mid x\in F^i_t\}$
\item $\OTZT(x_c)=(c,n-c)$, $b<c<a$\newedit{,}
\end{itemize}
then the contributions these points to the hypervolume of $F^i_t$ satisfy:
\begin{align*}
\left(\deltahv(x_a, F)>\deltahv(x_c, F)\right)  
    \wedge \left(\deltahv(x_b, F)>\deltahv(x_c, F)\right).
\end{align*}
\end{lemma}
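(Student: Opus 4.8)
The plan is to exploit the essentially one-dimensional structure of the layer $F$. First I would observe that, by Lemma~\ref{lem:otzt-dominance}, the two Pareto-optimal points dominate every other search point, so a non-dominated-sorting layer that contains the non-Pareto-optimal points $x_a,x_b,x_c$ cannot also contain $0^n$ or $1^n$ (otherwise $x_a$ would be dominated by a point of its own layer). Hence every point of $F$ is non-Pareto-optimal and its fitness is of the form $(\ones{x},n-\ones{x})$; in other words all of $F$ lies on the segment $\{(k,n-k)\mid 1\le k\le n-1\}$. Listing the distinct fitness values of $F$ ordered by the first objective as $k_1<k_2<\dots<k_m$, the hypotheses give $k_1=b$ (since $x_b$ maximises the number of zeroes in $F$, hence minimises the number of ones), $k_m=a$ (since $x_a$ maximises the number of ones), and $c=k_\ell$ for some index $1<\ell<m$ because $b<c<a$; in particular $m\ge 3$, so $x_c$ is sandwiched between two other fitness values of $F$.

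Next I would write down the planar hypervolume-contribution formulas, using that the fitness values of $x_a,x_b,x_c$ are unique in $F$ (a duplicated point would contribute $0$). Since the points of $F$ form a strictly decreasing staircase and the reference point $h$ is dominated by all of them, the region contributed exclusively by the interior point $x_c$ at $(k_\ell,n-k_\ell)$ is the rectangle $(k_{\ell-1},k_\ell]\times(n-k_{\ell+1},n-k_\ell]$, so $\deltahv(x_c,F)=(k_\ell-k_{\ell-1})(k_{\ell+1}-k_\ell)$. The extreme point $x_a$ at $(a,n-a)$ instead owns the strip $(k_{m-1},a]\times[h_2,n-a]$, giving $\deltahv(x_a,F)=(a-k_{m-1})(n-a-h_2)$, and symmetrically $\deltahv(x_b,F)=(k_2-b)(b-h_1)$. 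Every consecutive gap $k_j-k_{j-1}$ is a positive integer, hence at least $1$.

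Finally I would plug in the location of the reference point. The two gaps $c-k_{\ell-1}$ and $k_{\ell+1}-c$ are positive integers summing to $k_{\ell+1}-k_{\ell-1}\le k_m-k_1=a-b\le n-2$, so AM--GM gives $\deltahv(x_c,F)\le\bigl(\tfrac{a-b}{2}\bigr)^2\le\bigl(\tfrac{n-2}{2}\bigr)^2<(n/2)^2$. On the other hand $h\preceq(-(n/2)^2,-(n/2)^2)$ means $-h_1\ge(n/2)^2$ and $-h_2\ge(n/2)^2$, hence $\deltahv(x_a,F)\ge 1\cdot\bigl((n-a)+(n/2)^2\bigr)>(n/2)^2$ (using $n-a\ge 1$) and $\deltahv(x_b,F)\ge 1\cdot\bigl(b+(n/2)^2\bigr)>(n/2)^2$ (using $b\ge 1$). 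Chaining the inequalities yields $\deltahv(x_a,F)>\deltahv(x_c,F)$ and $\deltahv(x_b,F)>\deltahv(x_c,F)$, as claimed.

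The main obstacle is not the arithmetic but pinning down the geometry rigorously: one must argue that $F$ contains no off-line point — exactly where Lemma~\ref{lem:otzt-dominance} together with the definition of the non-dominated layers is used — and justify that the rectangles written above really equal $\hv(F)-\hv(F\setminus\{x\})$, which is where the uniqueness of the three fitness values enters. Once the picture is fixed, the quantitative heart of the argument is simply that $h$ is placed so far below the objective space (each coordinate at most $-(n/2)^2$) that the unbounded side of an extreme point's rectangle outweighs any rectangle an interior point can own, whose side lengths are both at most $n$.
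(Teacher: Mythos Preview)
Your proposal is correct and follows essentially the same approach as the paper: bound $\deltahv(x_c,F)<(n/2)^2$ via the product of the two adjacent gaps (the paper maximises the quadratic directly, you use AM--GM), and bound $\deltahv(x_a,F),\deltahv(x_b,F)\ge (n/2)^2$ using that the reference point satisfies $h\preceq(-(n/2)^2,-(n/2)^2)$. Your write-up is in fact a bit more careful than the paper's in first arguing, via Lemma~\ref{lem:otzt-dominance}, that $F$ cannot contain $0^n$ or $1^n$ and hence all its fitness values lie on the line $\{(k,n-k)\mid 1\le k\le n-1\}$.
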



\begin{proof}
We have $n-1\geq a>c>b\geq 1$ because the points are non-Pareto optimal. 
The contribution of $x_c$ to the hypervolume is the largest when there is no other 
point than $x_c$ between $x_a$ and $x_b$ because 
any other points may diminish the hypervolume contribution of $c$, but have no effect 
on that of $a$ and $b$.
Formally, 
\begin{align*}
\deltahv(x_c,F) 
    &\leq (c-b)((n-c)-(n-a)) \\
    &\leq (c-1)((n-c)-1) = (c-1)(n-2-(c-1))
\end{align*}
and as a function of $c-1$, this upper bound is maximised when $c-1=\lceil (n-2)/2 
\rceil=\lceil n/2 \rceil-1$, so
\begin{align*}
\deltahv(x_c,F) 
    &\leq (\lceil n/2 \rceil-1)(\lfloor n/2 \rfloor-1)
    < (n/2)^2.
\end{align*}

Because $x_a$ is the unique point with the largest number of $1$s, then its contribution
to the hypervolume is a rectangle of width \neweditx{at least}~1 and height at least $(n/2)^2$, thus $\deltahv(x_a,F) \ge (n/2)^2$.

Similarly, the contribution of $x_b$ as the unique point with the largest number of $0$s and so its hypervolume contribution is the volume of a rectangle of height at least 1 and width at least $(n/2)^2$, thus $\deltahv(x_b,F) \ge (n/2)^2$.
\end{proof}

\begin{lemma}\label{lem:sms-emoa-progress}
SMS-EMOA 
    \neweditx{
    using $h \preceq (-(n/2)^2, -(n/2)^2)$, 
    with $\mu\geq 3$,} 
    and avoiding genotype duplication 
is \ZOmonotone on \OTZTfull for $n\geq 4$
if $P_0\cap \{0^n,1^n\}=\emptyset$. 
\end{lemma}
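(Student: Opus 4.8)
The plan is to mirror the proofs of Lemma~\ref{lem:nsgaii-progress} and Lemma~\ref{lem:nsgaiii-progress}: distinguish cases according to how many points of $F=\{0^n,1^n\}$ lie in the immediate population $R_t=P_t\cup\{x\}$, and use Lemma~\ref{lem:sms-emoa-deltahv} to control which point is ejected from the critical layer. If the offspring is a genotype duplicate of a member of $P_t$ it is rejected and $P_{t+1}=P_t$, so \ZOmonotone holds trivially; hence assume $|R_t|=\mu+1\geq 4$. Writing $L:=F^{i^*}_{t+1}$ for the critical (last) layer, it suffices to show that a point of $R_t$ attaining $\max\{\ones{z}\mid z\in R_t\}$ and a point attaining $\max\{\zeroes{z}\mid z\in R_t\}$ both survive the single ejection step, because $P_t\subseteq R_t$ then yields the two conditions of Definition~\ref{def:0/1-monotone}.

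A preliminary observation I would record first: since $P_0\cap F=\emptyset$ and genotype duplicates are rejected, the population always contains at most one copy of $0^n$ and at most one copy of $1^n$, and moreover once a point of $F$ has entered the population it is never ejected. Indeed, by Lemma~\ref{lem:otzt-dominance} any point of $F\cap R_t$ dominates every point of $R_t\setminus F$, so it lies in the first layer $F^1_{t+1}$; since $|F\cap R_t|\leq 2<4\leq|R_t|$ there are at least two non-Pareto-optimal points in $R_t$, so a non-empty second layer always exists, $F^1_{t+1}$ is never the critical layer, and nothing in it is ejected. Consequently, if $R_t$ contains both $0^n$ and $1^n$ then both survive and $P_{t+1}$ contains the globally maximal numbers $n$ of ones and of zeroes, so we are done; and if $R_t$ contains exactly one of them, say $z$ with \wlo $z=1^n$ (the case $z=0^n$ is symmetric), then $z$ survives and provides the maximum number $n$ of ones of $R_t$, so it only remains to show that the point with the most zeroes inside $L=F^2_{t+1}$ (the non-Pareto-optimal points of $R_t$) survives. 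The remaining case $R_t\cap F=\emptyset$ has $L=F^{i^*}_{t+1}=R_t$, all points non-Pareto-optimal, and requires that both the most-ones point and the most-zeroes point of $L$ survive. In all cases $L$ consists only of non-Pareto-optimal points and $|L|\geq 3$.

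Both remaining cases reduce to the following claim about the ejection from $L$: with $a:=\max\{\ones{z}\mid z\in L\}$ and $b:=\min\{\ones{z}\mid z\in L\}$, after removing a point of smallest hypervolume contribution some point with $a$ ones and some point with $b$ ones (\ie $n-b$ zeroes) still remain in $L$. If $a=b$, all points of $L$ have fitness $(a,n-a)$ and this is trivial. Otherwise take a point $x_a\in L$ with $\ones{x_a}=a$. If $L$ has a second point with fitness $(a,n-a)$, then a copy of this fitness survives the single removal. If $x_a$ is the unique point of $L$ with fitness $(a,n-a)$, then, being the extreme with the smallest $f_2$ value, its contribution equals the area of a rectangle of width at least $1$ and height at least $(n/2)^2$ (using $h\preceq(-(n/2)^2,-(n/2)^2)$), so $\deltahv(x_a,L)\geq (n/2)^2>0$; and either some fitness distinct from $x_a$'s occurs at least twice in $L$ — such a point has contribution $0<\deltahv(x_a,L)$ — or all $|L|\geq 3$ points of $L$ have pairwise distinct fitness, in which case there is a point $x_c\in L$ with unique fitness and $b<\ones{x_c}<a$, and Lemma~\ref{lem:sms-emoa-deltahv} gives $\deltahv(x_c,L)<(n/2)^2\leq\deltahv(x_a,L)$. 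In every case $x_a$ does not attain the minimum contribution and is therefore not ejected. The point with $b$ ones (the extreme with smallest $f_1$) is handled symmetrically.

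The step I expect to be the main obstacle is this last one: guaranteeing, in each boundary configuration of the critical layer (few distinct fitness values, duplicated extremes), that there is a point of $L$ genuinely different from the two extremes whose hypervolume contribution is strictly smaller, so that the extremes cannot lie in the ejection set $Y$. This is exactly where the hypotheses $\mu\geq 3$ (forcing $|R_t|\geq 4$, hence $|L|\geq 3$), $n\geq 4$, and $h\preceq(-(n/2)^2,-(n/2)^2)$ are used, together with Lemma~\ref{lem:sms-emoa-deltahv}, whose bound $\deltahv(x_c,L)<(n/2)^2$ on an interior point is beaten by the $(n/2)^2$ lower bound on the contribution of a unique extreme. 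Once this claim is in place, combining it with the preliminary observation on the Pareto-optimal points establishes \ZOmonotone exactly as in Lemma~\ref{lem:nsgaii-progress}.
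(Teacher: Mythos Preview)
Your proposal is correct and follows essentially the same approach as the paper's proof: the same case distinction on $|R_t\cap F|$, the same observation that any Pareto-optimal point lies in $F^1_{t+1}$ and is never ejected, and the same use of Lemma~\ref{lem:sms-emoa-deltahv} (together with the $(n/2)^2$ lower bound on the contribution of a unique extreme) to show that the points with the most ones and most zeroes in the critical layer survive. Your treatment of the duplicate-fitness subcases is in fact slightly more careful than the paper's, which summarises that case by the (slightly loose) statement that ``the set of fitness vectors of the population remains unchanged from $P_t$ to $P_{t+1}$''; your version makes explicit that a duplicated fitness yields a zero-contribution point and hence the unique extremes cannot be in the ejection set.
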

\begin{proof}
For any generation $t\geq 0$, let $i:=\max\{\ones{z} \mid z\in F^{i^*}_{t+1}\}$ and 
$j:=\max\{\zeroes{z}\mid z\in  F^{i^*}_{t+1}\}$.
Consider the following cases:

If $R_t \cap F=\emptyset$: no Pareto-optimal solutions have been created yet. 
By Lemma~\ref{lem:otzt-dominance} all the solutions in $R_t$ then belong to the first 
layer $F^1_{t+1}$. This means $F^1_{t+1}=F^{i^*}_{t+1}=R_t$ and the ranking of the 
solutions uses the hypervolume contributions.
Consider the case that there exist two solutions $x$ and $x'$ in $F^{i^*}_{t+1}$ with
$f(x)=f(x')$, then their contributions to the hypervolume of $F^{i^*}_{t+1}$ are both 
$0$. However, at least one of those solutions survives to $P_{t+1}$ since the algorithm 
only removes one worst offspring. In other words, the set of fitness vectors of the 
population remains unchanged from $P_t$ to $P_{t+1}$ if there are duplicates of fitness 
in $F^{i^*}_{t+1}$. Otherwise, if there is no duplicates of fitness in $F^{i^*}_{t+1}$, 
consider the solutions $x, y$ with $f(x)=(i, n-i)$, $f(y)=(n-j,j)$ and any other solution
$z$ of $F^{i^*}_{t+1}$, then applying Lemma~\ref{lem:sms-emoa-deltahv} for $x_a=x$, 
$x_b=y$ and $x_c=z$ 
the contributions of $x$ and $y$ to the hypervolume of $F^{i^*}_{t+1}$ are always 
larger than that of any $z$, thus $x,y$ have the highest contributions to the hypervolume
and survive to $P_{t+1}$ given $\mu\geq 3>2$. 
Then we have 
    $\ones{x}=i \geq \max\{\ones{z} \mid z\in P_{t}\}$ and 
    $\zeroes{y}=j \geq \max\{\zeroes{z}\mid z\in P_{t}\}$ since $P_{t}\subseteq R_t=F^{i^*}_{t+1}$. 

If $R_t \cap F\neq\emptyset$: Pareto-optimal solutions have been created. If both
$0^n$ and $1^n$ are in $R_t$ these optimal solutions are kept in $P_{t+1}$ and they have 
the maximum numbers of $1$s and $0$s, respectively.
If only one of them is in $R_t$ then by Lemma~\ref{lem:otzt-dominance} that solution 
is in $F^1_t$ while all other, different solutions in $R_t$ are in the next non-dominated layer $F^2_t$. 
Thus \newedit{$F_{t+1}^1$ contains only one copy of a Pareto-optimal solution, and that solution,
denoted $z$, will survive to $P_{t+1}$}.
The solutions 
in $F^2_{t+1}$ compete for the remaining $\mu':=\mu-|F^1_{t+1}| \geq 2$ slots in $P_{t+1}$
using the contributions to the hypervolume of $F^2_{t+1}$. Repeating the same argument 
as in the previous case but now for $F^{i^*}_{t+1}=F^2_{t+1}$ and the $\mu'$ available slots 
implies that two solutions $x,y$ with the maximum numbers of $1$s and $0$s respectively 
of $F^2_{t+1}$ survive to $P_{t+1}$. The surviving solutions $x,y,z$ contains the maximum 
numbers of $1$s and $0$s of $F^1_{t+1}\cup F^2_{t+1} = R_t$, and these numbers are no less 
than those of $P_t$ since $P_t \subseteq R_t$.
\end{proof}

\begin{theorem}\label{thm:sms-emoa}
SMS-EMOA as described in Lemma~\ref{lem:sms-emoa-progress}
and using the standard bitwise mutation or the local mutation 
optimises \OTZTfull with $n\geq 4$ in $\bigO(\mu n\log{n})$ fitness evaluations. 
\end{theorem}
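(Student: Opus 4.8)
The plan is to reuse the two-phase template of Theorem~\ref{thm:nsgaii}, adjusted to the steady-state $(\mu+1)$ scheme of SMS-EMOA. First I would invoke Lemma~\ref{lem:sms-emoa-progress}: under the stated choices of the reference point $h$, of $\mu\ge 3$, of the genotype-duplication mechanism and of the initial population, SMS-EMOA is \ZOmonotone on \OTZTfull. Hence the largest number of $1$s and the largest number of $0$s present in the population never decrease, so by Lemma~\ref{lem:otzt-dominance} the algorithm has found the Pareto-optimal set $F=\{0^n,1^n\}$ as soon as both $0^n$ and $1^n$ have been generated. Since SMS-EMOA performs exactly one fitness evaluation per iteration, it then suffices to bound the expected number of iterations until this event, and that bound equals the bound on fitness evaluations up to a constant factor.

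Second, I would quantify the per-iteration progress by a fitness-level argument. Let $i:=\max\{\ones{z}\mid z\in P_t\}<n$. To produce an offspring with $i+1$ ones it is enough to select a parent $x$ with $\ones{x}=i$ (probability at least $1/\mu$, as the parent is drawn uniformly from $P_t$) and then flip exactly one of its $n-i$ zero-bits while leaving the rest unchanged; this has probability at least $\frac{n-i}{n}\left(1-\frac1n\right)^{n-1}\ge\frac{n-i}{en}$ for standard bitwise mutation and $\frac{n-i}{n}$ for local mutation, so the success probability per iteration is at least $s_i:=\frac{n-i}{e\mu n}$ in both cases. Crucially, such an offspring has strictly more ones than every search point currently in $P_t$, so it can neither be a genotype duplicate (hence is not rejected) nor removed by the survival step (by the \ZOmonotone property it is kept in $P_{t+1}$), and the event therefore strictly increases $\max\{\ones{z}\mid z\in P_t\}$. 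Summing waiting times, starting from $i_0:=\max\{\ones{z}\mid z\in P_0\}\le n-1$, yields
\begin{align*}
\sum\nolimits_{i=i_0}^{n-1}\frac{1}{s_i}\le\sum\nolimits_{i=0}^{n-1}\frac{e\mu n}{n-i}=e\mu n\sum\nolimits_{k=1}^{n}\frac1k=\bigO(\mu n\log n)
\end{align*}
iterations to create $1^n$; the symmetric argument applied to the zero-bits of the individual with the most zeroes gives the same bound for $0^n$. Adding the two expectations yields $\bigO(\mu n\log n)$ iterations, hence $\bigO(\mu n\log n)$ fitness evaluations, until $F$ is covered.

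I do not expect a serious obstacle here: the delicate part — showing that progress is never lost, in particular that the first Pareto-optimal point cannot be ejected before the second is found, that the genotype-duplication filter interacts benignly with hypervolume-based survival, and that the extreme points of the critical layer are always retained — is entirely encapsulated in Lemma~\ref{lem:sms-emoa-progress} (which in turn relies on Lemma~\ref{lem:sms-emoa-deltahv}). What remains is the routine coupon-collector computation above, which is identical to Theorem~\ref{thm:nsgaii} except that the $(\mu+1)$ scheme produces one offspring rather than $\mu$, so there is no $\mu$-fold amplification of the success probability and the iteration count carries an extra factor $\mu$ — exactly matching the stated $\bigO(\mu n\log n)$ bound.
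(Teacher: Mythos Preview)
Your proposal is correct and follows essentially the same approach as the paper's own proof: invoke Lemma~\ref{lem:sms-emoa-progress} to guarantee that progress in the maximum numbers of $1$s and $0$s is never lost, then apply the standard fitness-level/coupon-collector argument with per-iteration success probability $s_i=\frac{n-i}{e\mu n}$ to obtain $\sum_{i=0}^{n-1}1/s_i=\bigO(\mu n\log n)$. Your explicit remark that the newly created offspring with $i{+}1$ ones cannot be a genotype duplicate and must survive is slightly more detailed than the paper's write-up, but the underlying reasoning is identical.
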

\begin{proof}
By Lemma~\ref{lem:sms-emoa-progress}, the progress in both numbers of $1$s and $0$s are 
not lost owing to the choice of the population size, the diversity mechanism, and
the choice of the reference point $h$. 
It remains to estimate the expected number 
of generations until both $0^n$ and $1^n$ of $F$ are presented in the population. 

At generation $t$, define $i:=\max\{\ones{z} \mid z\in P_{t}\}$, then similar to the 
argument in the proof of Theorem~\ref{thm:nsgaii}, the probability to increase the number
of $1$s in the next generation is at least $\frac{n-i}{e\mu n}=:s_i$ for both mutation 
operators. Thus, the expected number of generations to create the $1^n$ solution is no more 
than: 
\begin{align*}
\sum\nolimits_{i=i_0}^{n-1} \frac{1}{s_i}
  = \sum\nolimits_{i=0}^{n-1} \frac{e\mu n}{n-i}
  = \bigO(\mu n\log{n}).
\end{align*}
This is also asymptotically the expected number of generations to create the $0^n$ 
solution, and the overall expected number of generations or fitness evaluations to 
optimise the function. 
\end{proof}


Similar to the results of NSGA-II and NSGA-III, this result shows that SMS-EMOA 
that avoids genotype duplication can also optimise $\OTZTfull$ in $O(n\log{n})$ fitness
evaluations. This efficiency is attributed to the proper choice of the reference point
in the calculation of the hypervolume contribution. This allows the preference of the 
extreme points in the critical layer. 
When copies of solutions are allowed in the population, SMS-EMOA becomes inefficient
if its population is too small. The proof argument is similar to the analysis of \TWOMAX
in \cite{Friedrich2009}, thus we will not repeat it here. 

\begin{theorem}\label{thm:sms-emoa-no-diversity}
The vanilla SMS-EMOA requires $\Omega(n^n)$ \neweditx{expected}
fitness evaluations to optimise \OTZTfull \neweditx{if $\mu=o(n/\log{n})$}. 
\end{theorem}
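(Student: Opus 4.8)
The plan is to follow the structure of the proof of Theorem~\ref{thm:nsgaii-no-diversity} and of the \TWOMAX takeover analysis in~\cite{Friedrich2009}, but to exploit the steady-state ($\mu$+$1$) scheme of SMS-EMOA: only one offspring is produced per iteration and the population is updated immediately, which is exactly what lets us weaken the restriction from $\mu=o(\sqrt n)$ to $\mu=o(n/\log n)$. We may assume standard bitwise mutation (with local mutation the claim is trivial, since once an optimum has taken over the population the other one can never be created). First, by a union bound the initial population contains neither $0^n$ nor $1^n$ with probability at least $1-2\mu\cdot 2^{-n}=1-o(1)$; assume this occurs. As only one offspring is created per iteration, the two Pareto-optimal points are produced in different iterations, so \wlo some $x\in F$ is created first, at an iteration $t_0$. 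By Lemma~\ref{lem:otzt-dominance} this copy of $x$ is the only point in the first non-dominated layer of $R_{t_0}$, while the $\mu$ non-Pareto points of $P_{t_0}$ form the critical (last) layer; hence the copy of $x$ is retained and $P_{t_0+1}$ contains exactly one copy of $x$.

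Next, I would prove that with probability $1-o(1)$ the point $x$ takes over the entire population before the second optimum $y\in F\setminus\{x\}$ is ever produced. The crucial observation is monotonicity: as long as $y$ has not been created and the current number $c$ of copies of $x$ satisfies $c<\mu$, all copies of $x$ lie in the first layer (they all share one fitness vector) and the critical layer is strictly below it, so a copy of $x$ is never ejected; thus $c$ is non-decreasing and increases by exactly one whenever the produced offspring is a plain copy of some $x$-individual. Call such an iteration \emph{good} and one that produces $y$ \emph{bad}. When the count is $c$ we have $\prob{\text{good}}\ge \frac{c}{\mu}(1-1/n)^n\ge \frac{c}{4\mu}$ for $n\ge 2$, and $\prob{\text{bad}}\le 1/n$ since at least one bit must flip from any non-$y$ parent. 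Restricting attention to iterations that are good or bad, the conditional probability of a bad one when the count is $c$ is at most $\frac{1/n}{c/(4\mu)+1/n}\le \frac{4\mu}{cn}$, so the probability that all $\mu-1$ necessary increments happen before the first bad iteration is at least $\prod_{c=1}^{\mu-1}(1-\frac{4\mu}{cn})\ge 1-\frac{4\mu}{n}\sum_{c=1}^{\mu-1}\frac{1}{c}=1-\bigO(\mu\log\mu/n)=1-o(1)$, where the last step uses $\mu=o(n/\log n)$.

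Finally, conditioned on this event (which has probability $1-o(1)$), the population consists of $\mu$ identical copies of $x$: any non-Pareto offspring is dominated by $x$ and discarded, and a further copy of $x$ does not change the multiset of fitness vectors, so the population is frozen until $y$ is generated, and generating $y$ requires flipping all $n$ bits of a copy of $x$, which happens with probability $n^{-n}$ in each iteration. Hence, starting from iteration $t_0$, the expected number of further iterations (and fitness evaluations) is at least $n^n$, and by the law of total probability the overall expected number of fitness evaluations is at least $(1-o(1))\cdot n^n=\Omega(n^n)$. The hypervolume reference point $h$ plays no role, as it only decides which dominated point is removed.

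The main obstacle is the second step. The naive takeover bound $(1-\frac{4\mu}{n})^{\mu-1}\ge 1-\frac{4\mu(\mu-1)}{n}$, used for NSGA-II in Theorem~\ref{thm:nsgaii-no-diversity}, is only $1-o(1)$ for $\mu=o(\sqrt n)$; to obtain the stated range one has to use that good iterations become progressively more likely as $c$ grows and replace the geometric estimate by the harmonic sum $\frac{4\mu}{n}\sum_{c=1}^{\mu-1}\frac{1}{c}=\bigO(\mu\log\mu/n)$. A secondary technical point that needs care is justifying the monotonicity of the $x$-count, which rests on the fact that, while $x$ is the only Pareto-optimal point present, all of its copies have the same objective vector and therefore stay together in the first non-dominated layer.
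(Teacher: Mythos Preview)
Your proposal is correct and follows precisely the approach the paper intends: the paper omits the proof entirely, merely stating that ``the proof argument is similar to the analysis of \TWOMAX in~\cite{Friedrich2009}'', and your write-up supplies exactly that argument adapted to \OTZT. In particular, you correctly identify why the steady-state $(\mu+1)$ scheme yields the weaker requirement $\mu=o(n/\log n)$ rather than the $\mu=o(\sqrt{n})$ of Theorem~\ref{thm:nsgaii-no-diversity}: because the $x$-count~$c$ is updated after every single offspring, the takeover probability is bounded below by $\prod_{c=1}^{\mu-1}\bigl(1-\tfrac{4\mu}{cn}\bigr)\ge 1-\tfrac{4\mu}{n}H_{\mu-1}=1-O(\mu\log\mu/n)$, which is the refinement over the crude geometric estimate and is exactly the device used in~\cite{Friedrich2009}. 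Your auxiliary observations---that all copies of~$x$ sit together in the first layer so the critical layer is always strictly below them while $c<\mu$, that after takeover any non-Pareto offspring is alone in the last layer and hence removed, and that the reference point~$h$ is irrelevant to the argument---are all sound.
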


\section{Conclusions}\label{sec:conclude}
\sloppy
We \neweditx{introduced \OTZTfull as an example function on which the simple EMO algorithms 
 (G)SEMO fail}. \neweditx{In contrast,} popular EMO algorithms using non-dominated sorting, including \mbox{NSGA\nobreakdash-II}, 
NSGA-III and SMS-EMOA, can succeed with a very small population size and with a mild 
diversity mechanism of avoiding genotype duplication. These results demonstrated
the benefits of the non-dominated sorting, and other \neweditx{key algorithmic} components, including the use of either crowding distances, reference points or 
hypervolume calculation, in solving EMO problems. We hope that our results can provide 
theoretical foundations for the popularity and wide adoption of these popular algorithms 
in practice.

\ifarxiv
\section*{Acknowledgments}
\else
\begin{acks}
\fi
The authors would like to thank Benjamin Doerr for 
the helpful discussion on the structure of the \OJZJfull function.
\ifarxiv\else
\end{acks}
\fi

\balance
\bibliographystyle{ACM-Reference-Format}
\bibliography{references} 


\begin{thebibliography}{43}


\ifx \showCODEN    \undefined \def \showCODEN     #1{\unskip}     \fi
\ifx \showDOI      \undefined \def \showDOI       #1{#1}\fi
\ifx \showISBNx    \undefined \def \showISBNx     #1{\unskip}     \fi
\ifx \showISBNxiii \undefined \def \showISBNxiii  #1{\unskip}     \fi
\ifx \showISSN     \undefined \def \showISSN      #1{\unskip}     \fi
\ifx \showLCCN     \undefined \def \showLCCN      #1{\unskip}     \fi
\ifx \shownote     \undefined \def \shownote      #1{#1}          \fi
\ifx \showarticletitle \undefined \def \showarticletitle #1{#1}   \fi
\ifx \showURL      \undefined \def \showURL       {\relax}        \fi
\providecommand\bibfield[2]{#2}
\providecommand\bibinfo[2]{#2}
\providecommand\natexlab[1]{#1}
\providecommand\showeprint[2][]{arXiv:#2}

\bibitem[Badkobeh et~al\mbox{.}(2015)]%
        {Badkobeh2015}
\bibfield{author}{\bibinfo{person}{Golnaz Badkobeh},
  \bibinfo{person}{Per~Kristian Lehre}, {and} \bibinfo{person}{Dirk Sudholt}.}
  \bibinfo{year}{2015}\natexlab{}.
\newblock \showarticletitle{Black-box Complexity of Parallel Search with
  Distributed Populations}. In \bibinfo{booktitle}{\emph{Proceedings of the
  Foundations of Genetic Algorithms (FOGA'15)}}. \bibinfo{publisher}{{ACM}
  Press}, \bibinfo{pages}{3--15}.
\newblock


\bibitem[Beume et~al\mbox{.}(2007)]%
        {Beume2007}
\bibfield{author}{\bibinfo{person}{Nicola Beume}, \bibinfo{person}{Boris
  Naujoks}, {and} \bibinfo{person}{Michael T.~M. Emmerich}.}
  \bibinfo{year}{2007}\natexlab{}.
\newblock \showarticletitle{{SMS-EMOA:} Multiobjective selection based on
  dominated hypervolume}.
\newblock \bibinfo{journal}{\emph{European Journal of Operational Research}}
  \bibinfo{volume}{181}, \bibinfo{number}{3} (\bibinfo{year}{2007}),
  \bibinfo{pages}{1653--1669}.
\newblock


\bibitem[Bian and Qian(2022)]%
        {Bian2022PPSN}
\bibfield{author}{\bibinfo{person}{Chao Bian} {and} \bibinfo{person}{Chao
  Qian}.} \bibinfo{year}{2022}\natexlab{}.
\newblock \showarticletitle{Better Running Time of the Non-dominated Sorting
  Genetic Algorithm {II} ({NSGA-II}) by Using Stochastic Tournament Selection}.
  In \bibinfo{booktitle}{\emph{Proceedings of the International Conference on
  Parallel Problem Solving from Nature (PPSN~'22)}}
  \emph{(\bibinfo{series}{LNCS}, Vol.~\bibinfo{volume}{13399})}.
  \bibinfo{publisher}{Springer}, \bibinfo{pages}{428--441}.
\newblock


\bibitem[Bian et~al\mbox{.}(2023)]%
        {Bian2023}
\bibfield{author}{\bibinfo{person}{Chao Bian}, \bibinfo{person}{Yawen Zhou},
  \bibinfo{person}{Miqing Li}, {and} \bibinfo{person}{Chao Qian}.}
  \bibinfo{year}{2023}\natexlab{}.
\newblock \showarticletitle{Stochastic Population Update Can Provably Be
  Helpful in Multi-Objective Evolutionary Algorithms}. In
  \bibinfo{booktitle}{\emph{Proceedings of the Thirty-Second International
  Joint Conference on Artificial Intelligence ({IJCAI} '23)}}.
  \bibinfo{publisher}{ijcai.org}, \bibinfo{pages}{5513--5521}.
\newblock


\bibitem[Blank and Deb(2020)]%
        {Deb2020}
\bibfield{author}{\bibinfo{person}{Julian Blank} {and}
  \bibinfo{person}{Kalyanmoy Deb}.} \bibinfo{year}{2020}\natexlab{}.
\newblock \showarticletitle{Pymoo: Multi-Objective Optimization in Python}.
\newblock \bibinfo{journal}{\emph{{IEEE} Access}}  \bibinfo{volume}{8}
  (\bibinfo{year}{2020}), \bibinfo{pages}{89497--89509}.
\newblock


\bibitem[Blank et~al\mbox{.}(2019)]%
        {Blank2019}
\bibfield{author}{\bibinfo{person}{Julian Blank}, \bibinfo{person}{Kalyanmoy
  Deb}, {and} \bibinfo{person}{Proteek~Chandan Roy}.}
  \bibinfo{year}{2019}\natexlab{}.
\newblock \showarticletitle{Investigating the Normalization Procedure of
  {NSGA-III}}. In \bibinfo{booktitle}{\emph{Evolutionary Multi-Criterion
  Optimization}}, \bibfield{editor}{\bibinfo{person}{Kalyanmoy Deb},
  \bibinfo{person}{Erik Goodman}, \bibinfo{person}{Carlos~A. Coello~Coello},
  \bibinfo{person}{Kathrin Klamroth}, \bibinfo{person}{Kaisa Miettinen},
  \bibinfo{person}{Sanaz Mostaghim}, {and} \bibinfo{person}{Patrick Reed}}
  (Eds.). \bibinfo{publisher}{Springer International Publishing},
  \bibinfo{address}{Cham}, \bibinfo{pages}{229--240}.
\newblock


\bibitem[Cormen et~al\mbox{.}(2009)]%
        {Cormen2009}
\bibfield{author}{\bibinfo{person}{Thomas~H. Cormen},
  \bibinfo{person}{Charles~E. Leiserson}, \bibinfo{person}{Ronald~L. Rivest},
  {and} \bibinfo{person}{Clifford Stein}.} \bibinfo{year}{2009}\natexlab{}.
\newblock \bibinfo{booktitle}{\emph{Introduction to Algorithms}
  (\bibinfo{edition}{3rd} ed.)}.
\newblock \bibinfo{publisher}{The MIT Press}.
\newblock


\bibitem[Corus et~al\mbox{.}(2018)]%
        {Corus2018x}
\bibfield{author}{\bibinfo{person}{Dogan Corus}, \bibinfo{person}{Duc{-}Cuong
  Dang}, \bibinfo{person}{Anton~V. Eremeev}, {and}
  \bibinfo{person}{Per~Kristian Lehre}.} \bibinfo{year}{2018}\natexlab{}.
\newblock \showarticletitle{Level-Based Analysis of Genetic Algorithms and
  Other Search Processes}.
\newblock \bibinfo{journal}{\emph{{IEEE} Transaction on Evolutionary
  Computation}} \bibinfo{volume}{22}, \bibinfo{number}{5}
  (\bibinfo{year}{2018}), \bibinfo{pages}{707--719}.
\newblock


\bibitem[Dang et~al\mbox{.}(2016)]%
        {Dang2016a}
\bibfield{author}{\bibinfo{person}{Duc{-}Cuong Dang}, \bibinfo{person}{Tobias
  Friedrich}, \bibinfo{person}{Timo K{\"{o}}tzing}, \bibinfo{person}{Martin~S.
  Krejca}, \bibinfo{person}{Per~Kristian Lehre}, \bibinfo{person}{Pietro~S.
  Oliveto}, \bibinfo{person}{Dirk Sudholt}, {and} \bibinfo{person}{Andrew~M.
  Sutton}.} \bibinfo{year}{2016}\natexlab{}.
\newblock \showarticletitle{Escaping Local Optima with Diversity Mechanisms and
  Crossover}. In \bibinfo{booktitle}{\emph{Proceedings of the Genetic and
  Evolutionary Computation Conference ({GECCO '16})}}.
  \bibinfo{publisher}{{ACM}}, \bibinfo{pages}{645--652}.
\newblock


\bibitem[Dang et~al\mbox{.}(2023a)]%
        {Dang2023a}
\bibfield{author}{\bibinfo{person}{Duc{-}Cuong Dang}, \bibinfo{person}{Andre
  Opris}, \bibinfo{person}{Bahare Salehi}, {and} \bibinfo{person}{Dirk
  Sudholt}.} \bibinfo{year}{2023}\natexlab{a}.
\newblock \showarticletitle{Analysing the Robustness of {NSGA-II} under Noise}.
  In \bibinfo{booktitle}{\emph{Proceedings of the Genetic and Evolutionary
  Computation Conference ({GECCO} '23)}}. \bibinfo{publisher}{{ACM}},
  \bibinfo{pages}{642--651}.
\newblock


\bibitem[Dang et~al\mbox{.}(2023b)]%
        {Dang2023}
\bibfield{author}{\bibinfo{person}{Duc-Cuong Dang}, \bibinfo{person}{Andre
  Opris}, \bibinfo{person}{Bahare Salehi}, {and} \bibinfo{person}{Dirk
  Sudholt}.} \bibinfo{year}{2023}\natexlab{b}.
\newblock \showarticletitle{A Proof That Using Crossover Can Guarantee
  Exponential Speed-Ups in Evolutionary Multi-Objective Optimisation}.
\newblock \bibinfo{journal}{\emph{Proceedings of the AAAI Conference on
  Artificial Intelligence}} \bibinfo{volume}{37}, \bibinfo{number}{10}
  (\bibinfo{date}{Jun.} \bibinfo{year}{2023}), \bibinfo{pages}{12390--12398}.
\newblock


\bibitem[Dang et~al\mbox{.}(2024)]%
        {Dang2024}
\bibfield{author}{\bibinfo{person}{Duc-Cuong Dang}, \bibinfo{person}{Andre
  Opris}, {and} \bibinfo{person}{Dirk Sudholt}.}
  \bibinfo{year}{2024}\natexlab{}.
\newblock \showarticletitle{Crossover can guarantee exponential speed-ups in
  evolutionary multi-objective optimisation}.
\newblock \bibinfo{journal}{\emph{Artificial Intelligence}}
  \bibinfo{volume}{330} (\bibinfo{year}{2024}), \bibinfo{pages}{104098}.
\newblock
\showISSN{0004-3702}


\bibitem[Das and Dennis(1998)]%
        {Das1998}
\bibfield{author}{\bibinfo{person}{Indraneel Das} {and}
  \bibinfo{person}{John~E. Dennis}.} \bibinfo{year}{1998}\natexlab{}.
\newblock \showarticletitle{Normal-Boundary Intersection: {A} New Method for
  Generating the Pareto Surface in Nonlinear Multicriteria Optimization
  Problems}.
\newblock \bibinfo{journal}{\emph{{SIAM} Journal of Optimization}}
  \bibinfo{volume}{8}, \bibinfo{number}{3} (\bibinfo{year}{1998}),
  \bibinfo{pages}{631--657}.
\newblock


\bibitem[Deb(2011)]%
        {NSGAIICode2011}
\bibfield{author}{\bibinfo{person}{Kalyanmoy Deb}.}
  \bibinfo{year}{2011}\natexlab{}.
\newblock \bibinfo{title}{{NSGA-II} Source Code in {C}, version 1.1.6}.
\newblock
  \bibinfo{howpublished}{\url{https://www.egr.msu.edu/~kdeb/codes/nsga2/nsga2-gnuplot-v1.1.6.tar.gz}}.
\newblock
\newblock
\shownote{Accessed: 2022-08-15}.


\bibitem[Deb and Jain(2014)]%
        {Deb2014}
\bibfield{author}{\bibinfo{person}{Kalyanmoy Deb} {and}
  \bibinfo{person}{Himanshu Jain}.} \bibinfo{year}{2014}\natexlab{}.
\newblock \showarticletitle{An Evolutionary Many-Objective Optimization
  Algorithm Using Reference-Point-Based Nondominated Sorting Approach, Part I:
  Solving Problems With Box Constraints}.
\newblock \bibinfo{journal}{\emph{IEEE Transactions on Evolutionary
  Computation}} \bibinfo{volume}{18}, \bibinfo{number}{4}
  (\bibinfo{year}{2014}), \bibinfo{pages}{577--601}.
\newblock


\bibitem[Deb et~al\mbox{.}(2002)]%
        {Deb2002}
\bibfield{author}{\bibinfo{person}{Kalyanmoy Deb}, \bibinfo{person}{Amrit
  Pratap}, \bibinfo{person}{Sameer Agarwal}, {and} \bibinfo{person}{T.
  Meyarivan}.} \bibinfo{year}{2002}\natexlab{}.
\newblock \showarticletitle{A Fast and Elitist Multiobjective Genetic
  Algorithm: {NSGA-II}}.
\newblock \bibinfo{journal}{\emph{{IEEE} Transactions on Evolutionary
  Computation}} \bibinfo{volume}{6}, \bibinfo{number}{2}
  (\bibinfo{year}{2002}), \bibinfo{pages}{182--197}.
\newblock


\bibitem[Doerr et~al\mbox{.}(2015)]%
        {Doerr2015}
\bibfield{author}{\bibinfo{person}{Benjamin Doerr}, \bibinfo{person}{Carola
  Doerr}, {and} \bibinfo{person}{Franziska Ebel}.}
  \bibinfo{year}{2015}\natexlab{}.
\newblock \showarticletitle{From Black-Box Complexity to Designing New Genetic
  Algorithms}.
\newblock \bibinfo{journal}{\emph{Theoretical Computer Science}}
  \bibinfo{volume}{567} (\bibinfo{year}{2015}), \bibinfo{pages}{87--104}.
\newblock


\bibitem[Doerr et~al\mbox{.}(2019)]%
        {Doerr2019opl}
\bibfield{author}{\bibinfo{person}{Benjamin Doerr}, \bibinfo{person}{Christian
  Gie{\ss}en}, \bibinfo{person}{Carsten Witt}, {and} \bibinfo{person}{Jing
  Yang}.} \bibinfo{year}{2019}\natexlab{}.
\newblock \showarticletitle{The {(1+$\lambda$)} {Evolutionary} {Algorithm} with
  Self-Adjusting Mutation Rate}.
\newblock \bibinfo{journal}{\emph{Algorithmica}} \bibinfo{volume}{81},
  \bibinfo{number}{2} (\bibinfo{year}{2019}), \bibinfo{pages}{593--631}.
\newblock


\bibitem[Doerr et~al\mbox{.}(2013)]%
        {Doerr2012c}
\bibfield{author}{\bibinfo{person}{Benjamin Doerr}, \bibinfo{person}{Thomas
  Jansen}, \bibinfo{person}{Dirk Sudholt}, \bibinfo{person}{Carola Winzen},
  {and} \bibinfo{person}{Christine Zarges}.} \bibinfo{year}{2013}\natexlab{}.
\newblock \showarticletitle{Mutation Rate Matters Even When Optimizing
  Monotonic Functions}.
\newblock \bibinfo{journal}{\emph{Evolutionary Computation}}
  \bibinfo{volume}{21}, \bibinfo{number}{1} (\bibinfo{year}{2013}),
  \bibinfo{pages}{1--21}.
\newblock


\bibitem[Doerr et~al\mbox{.}(2017)]%
        {Doerr2017-fastGA}
\bibfield{author}{\bibinfo{person}{Benjamin Doerr}, \bibinfo{person}{Huu~Phuoc
  Le}, \bibinfo{person}{R{\'e}gis Makhmara}, {and} \bibinfo{person}{Ta~Duy
  Nguyen}.} \bibinfo{year}{2017}\natexlab{}.
\newblock \showarticletitle{Fast Genetic Algorithms}. In
  \bibinfo{booktitle}{\emph{Proceedings of the Genetic and Evolutionary
  Computation Conference (GECCO~'17)}}. \bibinfo{publisher}{{ACM} Press},
  \bibinfo{pages}{777--784}.
\newblock


\bibitem[Doerr and Qu(2022)]%
        {DoerrQu22}
\bibfield{author}{\bibinfo{person}{Benjamin Doerr} {and}
  \bibinfo{person}{Zhongdi Qu}.} \bibinfo{year}{2022}\natexlab{}.
\newblock \showarticletitle{A First Runtime Analysis of the {NSGA-II} on a
  Multimodal Problem}. In \bibinfo{booktitle}{\emph{Proceedings of the
  International Conference on Parallel Problem Solving from Nature
  ({PPSN}~'22)}} \emph{(\bibinfo{series}{LNCS}, Vol.~\bibinfo{volume}{13399})}.
  \bibinfo{publisher}{Springer}, \bibinfo{pages}{399--412}.
\newblock


\bibitem[Doerr and Qu(2023a)]%
        {DoerrQ2023}
\bibfield{author}{\bibinfo{person}{Benjamin Doerr} {and}
  \bibinfo{person}{Zhongdi Qu}.} \bibinfo{year}{2023}\natexlab{a}.
\newblock \showarticletitle{A First Runtime Analysis of the {NSGA-II} on a
  Multimodal Problem}.
\newblock \bibinfo{journal}{\emph{{IEEE} Transactions on Evolutionary
  Computation}} \bibinfo{volume}{27}, \bibinfo{number}{5}
  (\bibinfo{year}{2023}), \bibinfo{pages}{1288--1297}.
\newblock


\bibitem[Doerr and Qu(2023b)]%
        {Doerr2023Qu}
\bibfield{author}{\bibinfo{person}{Benjamin Doerr} {and}
  \bibinfo{person}{Zhongdi Qu}.} \bibinfo{year}{2023}\natexlab{b}.
\newblock \showarticletitle{Runtime Analysis for the {NSGA-II}: Provable
  Speed-Ups From Crossover}. In \bibinfo{booktitle}{\emph{Proceedings of the
  AAAI Conference on Artificial Intelligence, {AAAI} 2023}}.
  \bibinfo{publisher}{{AAAI} Press}, \bibinfo{pages}{12399--12407}.
\newblock


\bibitem[Doerr and Zheng(2021)]%
        {Doerr2021}
\bibfield{author}{\bibinfo{person}{Benjamin Doerr} {and}
  \bibinfo{person}{Weijie Zheng}.} \bibinfo{year}{2021}\natexlab{}.
\newblock \showarticletitle{Theoretical Analyses of Multi-Objective
  Evolutionary Algorithms on Multi-Modal Objectives}. In
  \bibinfo{booktitle}{\emph{Proceedings of the AAAI Conference on Artificial
  Intelligence, {AAAI}~2021}}. \bibinfo{publisher}{{AAAI} Press},
  \bibinfo{pages}{12293--12301}.
\newblock


\bibitem[Droste et~al\mbox{.}(2002)]%
        {Droste2002}
\bibfield{author}{\bibinfo{person}{Stefan Droste}, \bibinfo{person}{Thomas
  Jansen}, {and} \bibinfo{person}{Ingo Wegener}.}
  \bibinfo{year}{2002}\natexlab{}.
\newblock \showarticletitle{On the analysis of the {(1+1)} evolutionary
  algorithm}.
\newblock \bibinfo{journal}{\emph{Theoretical Computer Science}}
  \bibinfo{volume}{276}, \bibinfo{number}{1-2} (\bibinfo{year}{2002}),
  \bibinfo{pages}{51--81}.
\newblock


\bibitem[Friedrich et~al\mbox{.}(2009)]%
        {Friedrich2009}
\bibfield{author}{\bibinfo{person}{Tobias Friedrich},
  \bibinfo{person}{Pietro~S. Oliveto}, \bibinfo{person}{Dirk Sudholt}, {and}
  \bibinfo{person}{Carsten Witt}.} \bibinfo{year}{2009}\natexlab{}.
\newblock \showarticletitle{Analysis of Diversity-Preserving Mechanisms for
  Global Exploration}.
\newblock \bibinfo{journal}{\emph{Evolutionary Computation}}
  \bibinfo{volume}{17}, \bibinfo{number}{4} (\bibinfo{year}{2009}),
  \bibinfo{pages}{455--476}.
\newblock


\bibitem[Giel and Lehre(2010)]%
        {Giel2010}
\bibfield{author}{\bibinfo{person}{Oliver Giel} {and}
  \bibinfo{person}{Per~Kristian Lehre}.} \bibinfo{year}{2010}\natexlab{}.
\newblock \showarticletitle{On the Effect of Populations in Evolutionary
  Multi-Objective Optimisation}.
\newblock \bibinfo{journal}{\emph{Evolutionary Computation}}
  \bibinfo{volume}{18}, \bibinfo{number}{3} (\bibinfo{year}{2010}),
  \bibinfo{pages}{335--356}.
\newblock


\bibitem[Laumanns et~al\mbox{.}(2004)]%
        {Laumanns2004}
\bibfield{author}{\bibinfo{person}{Marco Laumanns}, \bibinfo{person}{Lothar
  Thiele}, {and} \bibinfo{person}{Eckart Zitzler}.}
  \bibinfo{year}{2004}\natexlab{}.
\newblock \showarticletitle{Running Time Analysis of Multiobjective
  Evolutionary Algorithms on Pseudo-Boolean Functions}.
\newblock \bibinfo{journal}{\emph{{IEEE} Transactions on Evolutionary
  Computation}} \bibinfo{volume}{8}, \bibinfo{number}{2}
  (\bibinfo{year}{2004}), \bibinfo{pages}{170--182}.
\newblock


\bibitem[Lehre(2011)]%
        {Lehre2010a}
\bibfield{author}{\bibinfo{person}{Per~Kristian Lehre}.}
  \bibinfo{year}{2011}\natexlab{}.
\newblock \showarticletitle{Negative Drift in Populations}. In
  \bibinfo{booktitle}{\emph{Proceedings of the International Conference on
  Parallel Problem Solving from Nature (PPSN~'10)}}
  \emph{(\bibinfo{series}{LNCS}, Vol.~\bibinfo{volume}{6238})}.
  \bibinfo{publisher}{Springer}, \bibinfo{pages}{244--253}.
\newblock


\bibitem[Lehre and Qin(2022)]%
        {LehreQ22}
\bibfield{author}{\bibinfo{person}{Per~Kristian Lehre} {and}
  \bibinfo{person}{Xiaoyu Qin}.} \bibinfo{year}{2022}\natexlab{}.
\newblock \showarticletitle{Self-Adaptation via Multi-Objectivisation: A
  Theoretical Study}. In \bibinfo{booktitle}{\emph{Proceedings of the Genetic
  and Evolutionary Computation Conference (GECCO~'22)}}.
  \bibinfo{publisher}{{ACM}}, \bibinfo{pages}{1417--1425}.
\newblock


\bibitem[Lengler and Steger(2018)]%
        {LenglerS18}
\bibfield{author}{\bibinfo{person}{Johannes Lengler} {and}
  \bibinfo{person}{Angelika Steger}.} \bibinfo{year}{2018}\natexlab{}.
\newblock \showarticletitle{Drift Analysis and Evolutionary Algorithms
  Revisited}.
\newblock \bibinfo{journal}{\emph{Combinatorics, Probability and Computing}}
  \bibinfo{volume}{27}, \bibinfo{number}{4} (\bibinfo{year}{2018}),
  \bibinfo{pages}{643--666}.
\newblock


\bibitem[Oliveto and Zarges(2015)]%
        {Oliveto2015}
\bibfield{author}{\bibinfo{person}{Pietro~S. Oliveto} {and}
  \bibinfo{person}{Christine Zarges}.} \bibinfo{year}{2015}\natexlab{}.
\newblock \showarticletitle{Analysis of diversity mechanisms for optimisation
  in dynamic environments with low frequencies of change}.
\newblock \bibinfo{journal}{\emph{Theoretical Computer Science}}
  \bibinfo{volume}{561} (\bibinfo{year}{2015}), \bibinfo{pages}{37--56}.
\newblock


\bibitem[Opris et~al\mbox{.}(2024)]%
        {Opris2024}
\bibfield{author}{\bibinfo{person}{Andre Opris}, \bibinfo{person}{Duc-Cuong
  Dang}, \bibinfo{person}{Franke Neumann}, {and} \bibinfo{person}{Dirk
  Sudholt}.} \bibinfo{year}{2024}\natexlab{}.
\newblock \showarticletitle{Runtime Analyses of NSGA-III on Many-Objective
  Problems}. In \bibinfo{booktitle}{\emph{Proceedings of the Genetic and
  Evolutionary Computation Conference ({GECCO '24})}}.
  \bibinfo{publisher}{{ACM}}, \bibinfo{pages}{(To appear)}.
\newblock


\bibitem[Qin and Lehre(2022)]%
        {QinL22}
\bibfield{author}{\bibinfo{person}{Xiaoyu Qin} {and}
  \bibinfo{person}{Per~Kristian Lehre}.} \bibinfo{year}{2022}\natexlab{}.
\newblock \showarticletitle{Self-Adaptation via Multi-objectivisation: An
  Empirical Study}. In \bibinfo{booktitle}{\emph{Proceedings of the
  International Conference on Parallel Problem Solving from Nature
  ({PPSN}~'22)}} \emph{(\bibinfo{series}{LNCS}, Vol.~\bibinfo{volume}{13398})}.
  \bibinfo{publisher}{Springer}, \bibinfo{pages}{308--323}.
\newblock


\bibitem[Rowe and Sudholt(2014)]%
        {Rowe2013}
\bibfield{author}{\bibinfo{person}{Jonathan~E. Rowe} {and}
  \bibinfo{person}{Dirk Sudholt}.} \bibinfo{year}{2014}\natexlab{}.
\newblock \showarticletitle{The choice of the offspring population size in the
  (1,$\lambda$) evolutionary algorithm}.
\newblock \bibinfo{journal}{\emph{Theoretical Computer Science}}
  \bibinfo{volume}{545} (\bibinfo{year}{2014}), \bibinfo{pages}{20--38}.
\newblock


\bibitem[Wietheger and Doerr(2023)]%
        {WiethegerD23}
\bibfield{author}{\bibinfo{person}{Simon Wietheger} {and}
  \bibinfo{person}{Benjamin Doerr}.} \bibinfo{year}{2023}\natexlab{}.
\newblock \showarticletitle{A Mathematical Runtime Analysis of the
  Non-dominated Sorting Genetic Algorithm {III} {(NSGA-III)}}. In
  \bibinfo{booktitle}{\emph{Proceedings of the Thirty-Second International
  Joint Conference on Artificial Intelligence ({IJCAI} '23)}}.
  \bibinfo{publisher}{ijcai.org}, \bibinfo{pages}{5657--5665}.
\newblock


\bibitem[Zheng and Doerr(2022)]%
        {Zheng2022}
\bibfield{author}{\bibinfo{person}{Weijie Zheng} {and}
  \bibinfo{person}{Benjamin Doerr}.} \bibinfo{year}{2022}\natexlab{}.
\newblock \showarticletitle{Better Approximation Guarantees for the {NSGA-II}
  by Using the Current Crowding Distance}. In
  \bibinfo{booktitle}{\emph{Proceedings of the Genetic and Evolutionary
  Computation Conference (GECCO~'22)}}. \bibinfo{publisher}{{ACM} Press},
  \bibinfo{pages}{611--619}.
\newblock


\bibitem[Zheng and Doerr(2023a)]%
        {ZhengD2023}
\bibfield{author}{\bibinfo{person}{Weijie Zheng} {and}
  \bibinfo{person}{Benjamin Doerr}.} \bibinfo{year}{2023}\natexlab{a}.
\newblock \showarticletitle{Mathematical runtime analysis for the non-dominated
  sorting genetic algorithm {II} {(NSGA-II)}}.
\newblock \bibinfo{journal}{\emph{Artificial Intelligence}}
  \bibinfo{volume}{325} (\bibinfo{year}{2023}), \bibinfo{pages}{104016}.
\newblock


\bibitem[Zheng and Doerr(2023b)]%
        {Doerr2023}
\bibfield{author}{\bibinfo{person}{Weijie Zheng} {and}
  \bibinfo{person}{Benjamin Doerr}.} \bibinfo{year}{2023}\natexlab{b}.
\newblock \showarticletitle{Runtime Analysis for the NSGA-II: Proving,
  Quantifying, and Explaining the Inefficiency For Many Objectives}.
\newblock \bibinfo{journal}{\emph{IEEE Transactions on Evolutionary
  Computation}} (\bibinfo{year}{2023}), \bibinfo{pages}{To appear}.
\newblock


\bibitem[Zheng and Doerr(2024)]%
        {Zheng2024}
\bibfield{author}{\bibinfo{person}{Weijie Zheng} {and}
  \bibinfo{person}{Benjamin Doerr}.} \bibinfo{year}{2024}\natexlab{}.
\newblock \showarticletitle{Runtime Analysis of the {SMS-EMOA} for
  Many-Objective Optimization}. In \bibinfo{booktitle}{\emph{Proceedings of the
  AAAI Conference on Artificial Intelligence, {AAAI}~2024}},
  \bibfield{editor}{\bibinfo{person}{Michael~J. Wooldridge},
  \bibinfo{person}{Jennifer~G. Dy}, {and} \bibinfo{person}{Sriraam Natarajan}}
  (Eds.). \bibinfo{publisher}{{AAAI} Press}, \bibinfo{pages}{20874--20882}.
\newblock


\bibitem[Zheng et~al\mbox{.}(2022)]%
        {ZhengLuiDoerrAAAI22}
\bibfield{author}{\bibinfo{person}{Weijie Zheng}, \bibinfo{person}{Yufei Liu},
  {and} \bibinfo{person}{Benjamin Doerr}.} \bibinfo{year}{2022}\natexlab{}.
\newblock \showarticletitle{A First Mathematical Runtime Analysis of the
  Non-dominated Sorting Genetic Algorithm {II} {(NSGA-II)}}. In
  \bibinfo{booktitle}{\emph{Proceedings of the {AAAI} Conference on Artificial
  Intelligence, {AAAI}~2022}}. \bibinfo{publisher}{{AAAI} Press},
  \bibinfo{pages}{10408--10416}.
\newblock


\bibitem[Zitzler and Thiele(1998)]%
        {Zitzler1998}
\bibfield{author}{\bibinfo{person}{Eckart Zitzler} {and}
  \bibinfo{person}{Lothar Thiele}.} \bibinfo{year}{1998}\natexlab{}.
\newblock \showarticletitle{Multiobjective Optimization Using Evolutionary
  Algorithms - {A} Comparative Case Study}. In
  \bibinfo{booktitle}{\emph{Parallel Problem Solving from Nature {PPSN}~1998}}
  \emph{(\bibinfo{series}{Lecture Notes in Computer Science},
  Vol.~\bibinfo{volume}{1498})}. \bibinfo{publisher}{Springer},
  \bibinfo{pages}{292--304}.
\newblock


\bibitem[Zitzler and Thiele(1999)]%
        {ZitzlerT1999}
\bibfield{author}{\bibinfo{person}{Eckart Zitzler} {and}
  \bibinfo{person}{Lothar Thiele}.} \bibinfo{year}{1999}\natexlab{}.
\newblock \showarticletitle{Multiobjective evolutionary algorithms: a
  comparative case study and the strength Pareto approach}.
\newblock \bibinfo{journal}{\emph{{IEEE} Transactions on Evolutionary
  Computation}} \bibinfo{volume}{3}, \bibinfo{number}{4}
  (\bibinfo{year}{1999}), \bibinfo{pages}{257--271}.
\newblock


\end{thebibliography}


\end{document}